\documentclass[preprint,11pt]{elsarticle}
\usepackage[a4paper,margin=2.5cm]{geometry}

\usepackage{amsmath,amssymb,amsfonts,amsthm}
\usepackage{mathrsfs}
\usepackage{graphicx}
\usepackage{booktabs}
\usepackage{algorithm}
\usepackage{algpseudocode}
\usepackage{xcolor}
\usepackage{hyperref}
\usepackage{url}
\usepackage{amsthm}

\newtheorem{Theorem}{Theorem}[section]

\newtheorem{Definition}[Theorem]{Definition}
\newtheorem{Notation}[Theorem]{Notation}
\newtheorem{Assumption}[Theorem]{Assumption}
\newtheorem{Proposition}[Theorem]{Proposition}
\newtheorem{Lemma}[Theorem]{Lemma}
\newtheorem{Corollary}[Theorem]{Corollary}

\newtheorem{Remark}[Theorem]{Remark}
\newtheorem{Example}[Theorem]{Example}

\newcommand{\tr}{\operatorname{tr}}
\newcommand{\rank}{\operatorname{rank}}
\newcommand{\R}{\mathbb{R}}
\newcommand{\diag}{\operatorname{diag}}

\journal{Linear Algebra and its Applications}

\begin{document}

\begin{frontmatter}



\title{Density-Matrix Spectral Embeddings for Categorical Data: Operator Structure and Stability}


\author[1]{Raquel Bosch-Romeu} 
\author[1]{Antonio Falcó}
\author[2]{José-Antonio Rodríguez-Gallego\corref{cor1}}
\ead{gallego@us.es}

\cortext[cor1]{Corresponding author}

\affiliation[1]{organization={ Departamento de Matem\'aticas, F\'{\i}sica y Ciencias Tecnol\'ogicas, Universidad Cardenal Herrera-CEU, CEU Universities},
            addressline={San Bartolom\'e 55}, 
            city={Alfara del Patriarca (Valencia)},
            postcode={46115}, 
            country={Spain}}
\affiliation[2]{organization={Department of Differential Equations \& Numerical Analysis},
            addressline={Avda. Reina Mercedes s/n}, 
            city={Seville},
            postcode={41012}, 
            country={Spain}}

\begin{abstract}
    We introduce a supervised dimensionality reduction methodology for categorical (and discretized mixed-type) data based on a density-matrix construction induced by class-conditional frequencies. 
    Given a labeled dataset encoded in a one-hot survey space, we assemble a frequency matrix whose columns aggregate feature occurrences within each class, and define a normalized Gram-type operator that satisfies the axioms of a density matrix. 
    The resulting representation admits an intrinsic rank bound controlled by the number of classes, enabling low-dimensional spectral embeddings via dominant eigenmodes. 
    Classification is performed in the reduced space through class-conditional kernel density estimation and a maximum-likelihood decision rule.
    We establish structural invariances, provide complexity estimates, and validate the approach on synthetic benchmarks probing high cardinality, sparsity, noise, and class imbalance.
\end{abstract}


\begin{highlights}
    \item A PSD density-operator for supervised representation of categorical data 
    \item Intrinsic rank bounded by number of classes 
    \item Low-dimensional embeddings obtained from dominant spectral subspace 
    \item A latent-space classifier is built by class-conditional kernel density estimation 
    \item Davis--Kahan perturbation and subspace-stability analysis with synthetic validation 
\end{highlights}

\begin{keyword}
density operator \sep categorical data \sep positive semidefinite matrix \sep spectral embedding \sep low-rank representation \sep kernel density estimation
\MSC 15A18 \sep 15A83 \sep 65F15 \sep 62H30 \sep 62G07
\end{keyword}

\end{frontmatter}



\section{Introduction}

High-dimensional categorical data arise naturally in many settings, including surveys, event logs, transactional records, and, more generally, problems in which qualitative descriptors dominate the feature space.
A standard encoding expands each categorical variable into a one-hot block and concatenates all blocks into a sparse binary vector.
Although this representation is faithful, it may lead to very large ambient dimensions when variables have many modalities, when rare categories are present, or when quantitative covariates are discretized.
The resulting data matrices are sparse and highly structured.
From a linear-algebraic viewpoint, such datasets are naturally summarized by contingency-type matrices and associated Gram operators, making spectral structure and low-rank approximations central to the analysis.

A classical and influential line of work treats categorical data through matrix factorizations of suitably normalized contingency tables.
Correspondence Analysis (CA) and its extensions, including Multiple Correspondence Analysis (MCA) and Discriminant Correspondence Analysis (DCA), derive geometric representations through singular value decompositions and weighted inner products, thereby extracting low-dimensional structure from categorical tables \cite{Greenacre1984,DCA}.
More broadly, spectral and kernel methods represent data through positive semidefinite operators whose eigenstructure encodes the relevant geometry; representative examples include kernel principal component analysis \cite{Scholkopf1998}, Laplacian eigenmaps \cite{BelkinNiyogi2003}, and spectral clustering \cite{NgJordanWeiss2002}.
These perspectives suggest that sparse one-hot categorical data may benefit from a carefully designed positive semidefinite operator whose dominant spectral directions capture class-relevant structure.

In this work, we propose a supervised operator construction tailored to labeled one-hot categorical data.
Starting from class-conditional frequency information, we construct a symmetric positive semidefinite, unit-trace matrix, which we interpret as a \emph{density operator} in the sense of quantum probability and quantum information theory \cite{NielsenChuang2000,Holevo2001}.
The normalization is not merely formal: after an entrywise square-root (amplitude) transform of class-conditional profiles, the construction induces a geometry closely related to the Bhattacharyya coefficient and the Hellinger distance between discrete distributions \cite{Bhattacharyya1943,Aherne1998}.
A key linear-algebraic consequence is an intrinsic low-rank bound: the resulting operator has rank at most the number of classes, and therefore its nonzero spectrum is fully determined by a $k\times k$ Gram matrix.
This yields a compact spectral embedding whose effective dimension is controlled by the number of classes rather than by the ambient one-hot dimension.

Beyond its basic structural properties (symmetry, positive semidefiniteness, unit trace, and rank control), the operator viewpoint enables a perturbation analysis that is central in matrix theory.
Small deviations in empirical class frequencies induce controlled perturbations of the operator, and, under a spectral-gap condition, the dominant invariant subspace (hence the associated embedding) is stable.
We formalize this through Davis--Kahan type $\sin\Theta$ bounds \cite{DavisKahan1970,YuWangSamworth2015} and complement them with explicit high-probability perturbation estimates under multinomial sampling in a blockwise categorical model.
These results provide a principled basis for comparing the proposed embedding with related spectral and low-rank constructions.

As an application of the spectral coordinates, we map samples into the dominant eigenspace and perform class assignment in the reduced space using class-conditional density estimation.
We employ kernel density estimation (KDE) in the Parzen--Rosenblatt framework \cite{Rosenblatt1956,Parzen1962}, with standard bandwidth selection considerations \cite{Silverman1986,Scott2015,WandJones1995}.
While the downstream classifier can be replaced by alternative decision rules, this choice emphasizes the probabilistic interpretation of the latent representation and keeps the computational cost governed by the reduced dimension.

\subsection{Contributions}

The main contributions of the paper are as follows:
\begin{itemize}
  \item \textbf{A supervised PSD operator for one-hot categorical data.}
  We define a map from labeled one-hot datasets to a normalized positive semidefinite, unit-trace matrix (density operator), providing a structured matrix representation for supervised categorical learning \cite{NielsenChuang2000,Holevo2001}.

  \item \textbf{Intrinsic low-rank spectral structure and Gram reduction.}
  We prove that the operator rank is bounded by the number of classes and show that its nonzero spectrum can be recovered from a $k\times k$ Gram matrix, yielding an efficient and transparent spectral computation.

  \item \textbf{Hellinger/Bhattacharyya geometric interpretation.}
  We relate the construction to the Bhattacharyya affinity and Hellinger-type geometry on class-conditional categorical profiles, thereby connecting the proposed operator to a classical distributional similarity framework \cite{Bhattacharyya1943,Aherne1998}.

  \item \textbf{Perturbation and subspace stability analysis.}
  We establish spectral-subspace stability via Davis--Kahan type perturbation bounds and derive explicit high-probability operator perturbation estimates under multinomial sampling in blockwise categorical models \cite{DavisKahan1970,YuWangSamworth2015}.

  \item \textbf{Synthetic experiments illustrating spectral behavior.}
  We design controlled synthetic experiments (separation, sparsity, irrelevant variables, and imbalance) to illustrate the empirical behavior of the proposed spectral embedding and its stability trends.
\end{itemize}

\subsection{Paper organization}

Section~\ref{sec:categorical_rep} introduces the categorical survey space and the label encoding.
Section~\ref{sec:density_construction} presents the density-operator construction and its fundamental properties.
Section~\ref{sec:surrogate} develops the spectral embedding and low-dimensional surrogates.
Section~\ref{sec:theory_comparison} provides the theoretical analysis, including the Hellinger/Bhattacharyya interpretation, operator relations, perturbation bounds, and subspace stability estimates.
Section~\ref{sec:kde_classification} introduces latent-space density estimation and the resulting classification rule.
Section~\ref{sec:algorithm} summarizes the computational pipeline and analyzes its complexity.
Section~\ref{sec:synthetic} reports synthetic experiments.
The paper concludes with a discussion of limitations and directions for future work.

\section{Categorical Data Representation}
\label{sec:categorical_rep}

We formalize the categorical input space using a blockwise one-hot encoding that is standard in applications but particularly convenient
for an operator-theoretic treatment. Each observation is described by $q$ categorical variables (survey questions, fields, or events), where
variable $i$ has $m_i\ge 2$ mutually exclusive modalities. The resulting ambient dimension is
\[
d=\sum_{i=1}^q m_i,
\]
and each sample is a sparse vector in $\{0,1\}^d$ with exactly one active coordinate per block. This representation allows us to treat the
dataset as a structured collection of elementary events on a finite product space and, crucially, to aggregate class-conditional frequency
information into matrices whose spectral properties can be analyzed by standard tools in linear algebra.

We also adopt a one-hot encoding for the output labels, so that each labeled sample can be represented as a rank-one tensor
(or outer product) in $\R^{d\times k}$. This tensorial viewpoint is not introduced for computational convenience alone: it provides a natural
bridge between contingency-type tables and positive semidefinite operators constructed from class-conditional counts, which will be the
foundation of the density-operator methodology developed in the next section.
\begin{Notation}[Survey space and dimensions]
We consider a population $\Omega$ and a survey consisting of $q$ categorical questions.
The $i$-th question is encoded by a categorical variable $X_i$ with $m_i\ge 2$ mutually exclusive modalities
$\mathcal O_i=\{O_1^{(i)},\dots,O_{m_i}^{(i)}\}$.
We denote by $d=\sum_{i=1}^q m_i$ the ambient one-hot dimension.
\end{Notation}

Each individual $\omega\in\Omega$ produces an outcome for question $i$ encoded by the one-hot vector
\[
X_i(\omega)=\sum_{k=1}^{m_i}\alpha_k^{(i)}(\omega)\, \mathbf e_k^{(i)}\in\{0,1\}^{m_i},
\]
where $\alpha_k^{(i)}(\omega)=1$ if $\omega$ selects modality $O_k^{(i)}$, and $\alpha_k^{(i)}(\omega)=0$ otherwise.
Here $\mathbf e_k^{(i)}$ denotes the $k$-th canonical basis vector of $\R^{m_i}$.

\begin{Definition}[Categorical state space]
For each question $i$, the set of admissible outcomes is
\[
\mathcal X_{m_i}:=\{\mathbf e_1^{(i)},\dots,\mathbf e_{m_i}^{(i)}\}\subset\{0,1\}^{m_i}.
\]
The full survey outcome space is the Cartesian product
\[
\mathcal X_d := \mathcal X_{m_1}\times\cdots\times \mathcal X_{m_q}\subset\{0,1\}^d.
\]
\end{Definition}

\begin{Example}[Binary case]
If $m_i=2$, then $X_i(\omega)\in\{(1,0)^T,(0,1)^T\}\subset\{0,1\}^2$.
\end{Example}

\begin{Definition}[Survey vector (block-concatenation)]
The survey response of $\omega$ is represented by the concatenation of one-hot blocks,
\[
x(\omega):=\bigl(X_1(\omega),\dots,X_q(\omega)\bigr)\in\{0,1\}^d.
\]
Equivalently, $x(\omega)$ has exactly $q$ entries equal to $1$, one per block.
\end{Definition}


\subsection{Label encoding and tensor representation}
We consider a $k$-class classification problem. The output variable $Y$ takes values in
\[
\mathcal Y_k:=\{\mathbf e_1,\dots,\mathbf e_k\}\subset\{0,1\}^k,\qquad k\ge 2.
\]
Throughout, we focus on the regime $d\gg k$.

\begin{Definition}[Tensor encoding of labeled samples]
Given $x\in\mathcal X_d$ and a label $\mathbf e_y\in\mathcal Y_k$, define
\[
x\otimes \mathbf e_y := x\,\mathbf e_y^{\top}\in\R^{d\times k}.
\]
This matrix has all-zero columns except column $y$, which equals $x$.
The corresponding tensor state space is
\[
\mathcal X_d\otimes\mathcal Y_k :=
\{x\,\mathbf e_y^{\top}: x\in\mathcal X_d,\ \mathbf e_y\in\mathcal Y_k\}.
\]
\end{Definition}

\begin{Remark}[Discretized numerical variables]
Quantitative variables can be incorporated by discretization into bins and subsequent one-hot encoding,
thus fitting within the same survey-space formalism.
\end{Remark}

\begin{Definition}[Training dataset]
We consider a labeled training set
\[
\mathcal D := \{x^{(j)}\otimes \mathbf e_{y^{(j)}} : 1\le j\le n\}\subset \mathcal X_d\otimes\mathcal Y_k.
\]
The goal is to construct an empirical classification map
\[
\ell_{\mathcal D}:\mathcal X_d\to\mathcal Y_k
\]
approximating an ideal (unknown) classifier $\ell$.
\end{Definition}

\section{Constructing a Density Matrix from the Training Set}
\label{sec:density_construction}

\begin{Definition}[Class-conditional frequency vectors]
For each class label $\mathbf e_y\in\mathcal Y_k$, define the frequency vector
\[
\mathbf f_y := \sum_{x\otimes \mathbf e_y\in \mathcal D} x \in \R^d_{\ge 0}.
\]
Equivalently, $(\mathbf f_y)_i$ counts how many times attribute $i$ appears among samples of class $y$.
\end{Definition}

\begin{Definition}[Frequency matrix]
Define the frequency matrix
\[
F := [\mathbf f_1\,\mathbf f_2\,\cdots\,\mathbf f_k]\in\R^{d\times k}.
\]
\end{Definition}

\begin{Definition}[Amplitude lifting]
Define the entrywise square-root lifting
\[
X:=\sqrt{F}\in\R^{d\times k},\qquad X_{i,y}:=\sqrt{F_{i,y}}.
\]
\end{Definition}

\begin{Definition}[Density matrix]
\label{def:density_matrix}
The density matrix induced by $\mathcal D$ is
\[
\rho_{\mathcal D}:=\frac{XX^\top}{\mathrm{tr}(XX^\top)}\in\R^{d\times d}.
\]
\end{Definition}

\begin{Proposition}[Density-matrix axioms]
\label{prop:density_axioms}
$\rho_{\mathcal D}$ is symmetric, positive semidefinite, and satisfies $\mathrm{tr}(\rho_{\mathcal D})=1$.
\end{Proposition}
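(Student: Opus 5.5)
The plan is to verify the three axioms directly from the Gram structure $XX^\top$. First, though, I will check that the normalization is well defined, i.e.\ that $\tr(XX^\top)>0$.

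\textbf{Step 1: the trace is positive.} Using cyclicity of the trace,
\[
\tr(XX^\top)=\tr(X^\top X)=\sum_{i=1}^d\sum_{y=1}^k X_{i,y}^2=\sum_{i,y}F_{i,y}.
\]
Each sample $x$ has exactly $q$ ones, one per block, so this double sum equals $qn$. Since $n\ge 1$ (the training set is nonempty) and $q\ge 1$, the trace is strictly positive. This is the only point where a hypothesis is genuinely needed: the dataset must be nonempty. If $\mathcal D$ were empty, $F=0$ and $\rho_{\mathcal D}$ would be undefined. I would make this assumption explicit, which is the main (mild) obstacle.

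\textbf{Step 2: symmetry.} We have $(XX^\top)^\top=XX^\top$, and dividing by a positive scalar preserves symmetry.

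\textbf{Step 3: positive semidefiniteness.} For any $v\in\R^d$,
\[
v^\top XX^\top v=\|X^\top v\|_2^2\ge 0.
\]
Dividing by the positive trace keeps this nonnegative, so $\rho_{\mathcal D}\succeq 0$.

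\textbf{Step 4: unit trace.} By linearity of the trace,
\[
\tr(\rho_{\mathcal D})=\frac{\tr(XX^\top)}{\tr(XX^\top)}=1.
\]

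None of these steps uses the square-root lifting beyond the fact that $X$ is a real matrix. Entrywise nonnegativity of $F$ is needed only so that $\sqrt{F}$ is real. Step 1 identifies the trace explicitly as $qn$, which may be useful later.
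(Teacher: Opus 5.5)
Your proof is correct and follows the paper's argument essentially verbatim: symmetry of $XX^\top$, the identity $v^\top XX^\top v=\|X^\top v\|_2^2\ge 0$, and the trace ratio $\tr(XX^\top)/\tr(XX^\top)=1$. Your Step 1 is a worthwhile addition the paper leaves implicit: it shows $\tr(XX^\top)=\sum_{i,y}F_{i,y}=qn>0$ for a nonempty training set, so the normalization is well defined.
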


\begin{proof}
Symmetry is immediate since $XX^\top$ is symmetric.
For any $v\in\R^d$, $v^\top XX^\top v = \|X^\top v\|_2^2\ge 0$, hence $XX^\top\succeq 0$ and $\rho_{\mathcal D}\succeq 0$.
Finally, $\mathrm{tr}(\rho_{\mathcal D})=\mathrm{tr}(XX^\top)/\mathrm{tr}(XX^\top)=1$.
\end{proof}

\begin{Proposition}[Intrinsic rank bound]
\label{prop:rank_bound_mdpi}
$\mathrm{rank}(\rho_{\mathcal D})\le k$.
\end{Proposition}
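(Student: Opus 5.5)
The plan is to reduce the rank of $\rho_{\mathcal D}$ to the rank of the amplitude matrix $X\in\R^{d\times k}$ from the amplitude-lifting definition. First, $\mathrm{tr}(XX^\top)>0$ as soon as $\mathcal D$ is nonempty. Indeed, $\mathrm{tr}(XX^\top)=\sum_{i,y}X_{i,y}^2=\sum_{i,y}F_{i,y}=qn$, because each sample contributes exactly $q$ ones to its class column. So $\rho_{\mathcal D}$ is a nonzero scalar multiple of $XX^\top$, and multiplying by a nonzero scalar does not change rank. It therefore suffices to show $\mathrm{rank}(XX^\top)\le k$.

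For this I will use the column-space inclusion $\mathrm{range}(XX^\top)\subseteq \mathrm{range}(X)$: for every $v\in\R^d$,
\[
XX^\top v = X\,(X^\top v)
\]
is a linear combination of the $k$ columns $\sqrt{\mathbf f_1},\dots,\sqrt{\mathbf f_k}$ of $X$. Hence
\[
\mathrm{rank}(XX^\top)\le \mathrm{rank}(X)\le \min(d,k)=k,
\]
where the last equality uses the standing regime $d\gg k$. Alternatively, one can simply note $\min(d,k)\le k$, which holds in all cases.

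A finer statement that will be useful later (for the $k\times k$ Gram reduction) is $\mathrm{rank}(XX^\top)=\mathrm{rank}(X)$. This follows from $\ker(XX^\top)=\ker(X^\top)$: if $XX^\top v=0$, then $v^\top XX^\top v=\|X^\top v\|_2^2=0$, the same identity used in the proof of Proposition~\ref{prop:density_axioms}. Rank--nullity then gives $\mathrm{rank}(XX^\top)=d-\dim\ker(X^\top)=\mathrm{rank}(X^\top)=\mathrm{rank}(X)$. I would record this equality, but the bound alone only needs the inclusion.

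There is no genuine obstacle here. The only point requiring care is the normalization: one must check that the trace in the denominator is nonzero (assume $n\ge1$), so that $\rho_{\mathcal D}$ is well defined and rank is preserved. A secondary remark is that strict inequality occurs when some class is empty, giving a zero column, or when the class profiles $\sqrt{\mathbf f_y}$ are linearly dependent.
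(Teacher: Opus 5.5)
Your proposal is correct and follows essentially the same route as the paper: $\rho_{\mathcal D}$ is a positive multiple of $XX^\top$, and $\mathrm{rank}(XX^\top)=\mathrm{rank}(X)\le k$ because $X\in\R^{d\times k}$ has only $k$ columns. Your explicit check that $\tr(XX^\top)=qn>0$ is a small addition; the paper leaves this implicit when it calls the multiple positive.
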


\begin{proof}
Since $\rho_{\mathcal D}$ is a positive scalar multiple of $XX^\top$, we have
$\mathrm{rank}(\rho_{\mathcal D})=\mathrm{rank}(XX^\top)=\mathrm{rank}(X)$.
Because $X\in\R^{d\times k}$, it holds $\mathrm{rank}(X)\le k$.
\end{proof}

\begin{Remark}[Interpretation as mixed states]
In quantum probability, a symmetric Positive Semidefinite (PSD) unit-trace matrix is a density matrix.
The spectral decomposition of $\rho_{\mathcal D}$ expresses it as a convex combination of rank-one projectors,
providing an operator-theoretic representation of class-conditional attribute structure.
\end{Remark}

\section{Low-Dimensional Surrogates via Spectral Coordinates}
\label{sec:surrogate}

The density operator $\rho_{\mathcal D}$ constructed from class-conditional frequencies is a symmetric positive semidefinite matrix with
unit trace and, by Proposition~\ref{prop:rank_bound_mdpi}, has rank at most the number of classes. Consequently, its spectral decomposition
provides a natural mechanism to reduce the ambient one-hot dimension $d$ to a small number of degrees of freedom while preserving the dominant
invariant subspace encoded by the data. In this section we introduce the associated spectral coordinates and define the resulting surrogate
(training) representation used in subsequent estimation and classification steps.

Let $\rho_{\mathcal D}=\sum_{i=1}^{r_\star}\sigma_i\,\mathbf U_i\mathbf U_i^\top$ be the spectral decomposition of $\rho_{\mathcal D}$,
where $r_\star=\mathrm{rank}(\rho_{\mathcal D})\le k$, $\sigma_i>0$, and $\{\mathbf U_i\}$ is an orthonormal family in $\R^d$.
Each pair $(\sigma_i,\mathbf U_i)$ represents a nonzero eigenvalue and a corresponding eigenvector, and the truncation level $r\le r_\star$
selects the dominant $r$-dimensional invariant subspace used to build the low-dimensional surrogate coordinates.
\begin{Definition}[Normalized survey vector]
For $x\in\mathcal X_d$, define the Euclidean normalization
\[
\widetilde x := \frac{x}{\|x\|_2}=\frac{x}{\sqrt{q}}\in\R^d,
\]
since one-hot survey vectors satisfy $\|x\|_2^2=q$.
\end{Definition}

\begin{Definition}[Spectral coordinates and truncation]
Define the spectral coordinates for a normalized survey vector $\widetilde{x}$ as $\lambda_i(\widetilde x):=\mathbf U_i^\top \widetilde x$.
For $r\le r_\star$, define the latent coordinate map
\[
\pi_r(x):=\bigl(\lambda_1(\widetilde x),\dots,\lambda_r(\widetilde x)\bigr)\in\R^r.
\]
\end{Definition}

\begin{Proposition}[Support constraint]
For any $x\in\mathcal X_d$, the latent coordinate satisfies $\|\pi_r(x)\|_2\le 1$.
\end{Proposition}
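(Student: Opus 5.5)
The argument is Bessel's inequality for the orthonormal family $\{\mathbf U_1,\dots,\mathbf U_r\}$ applied to the unit vector $\widetilde x$. By the spectral decomposition recalled at the start of this section, $\mathbf U_1,\dots,\mathbf U_{r_\star}$ are orthonormal in $\R^d$, so for $r\le r_\star$ the matrix $U_r:=[\mathbf U_1\,\cdots\,\mathbf U_r]\in\R^{d\times r}$ satisfies $U_r^\top U_r=I_r$. By definition, $\pi_r(x)=U_r^\top\widetilde x$.

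The steps are as follows. First, compute
\[
\|\pi_r(x)\|_2^2=\sum_{i=1}^r(\mathbf U_i^\top\widetilde x)^2=\widetilde x^\top U_rU_r^\top\widetilde x .
\]
Second, note that $P_r:=U_rU_r^\top$ is the orthogonal projector onto $\mathrm{span}\{\mathbf U_1,\dots,\mathbf U_r\}$: it is symmetric and idempotent because $U_r^\top U_r=I_r$. Hence $I-P_r$ is also a projector and is PSD, which gives $\widetilde x^\top P_r\widetilde x\le\|\widetilde x\|_2^2$. Equivalently, one can write the Pythagorean decomposition $\|\widetilde x\|_2^2=\|P_r\widetilde x\|_2^2+\|(I-P_r)\widetilde x\|_2^2$ and use $\|P_r\widetilde x\|_2^2=\|U_r^\top\widetilde x\|_2^2$. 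Third, use the normalization: every $x\in\mathcal X_d$ has exactly $q$ unit entries, so $\|x\|_2^2=q$ and $\|\widetilde x\|_2=1$. Combining the three steps gives $\|\pi_r(x)\|_2\le 1$.

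There is no real obstacle. The only point needing care is that the bound relies on the orthonormality of the eigenvectors, not on the eigenvalues $\sigma_i$, so it holds for every truncation level $r\le r_\star$. As a remark, equality holds exactly when $\widetilde x$ lies in the dominant invariant subspace, that is, when $(I-P_r)\widetilde x=0$.
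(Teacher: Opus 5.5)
Your proof is correct and is essentially the paper's argument: both apply Bessel's inequality for the orthonormal family $\{\mathbf U_i\}$ to the unit vector $\widetilde x$. The only cosmetic difference is that the paper implicitly completes $\{\mathbf U_i\}$ to an orthonormal basis of $\R^d$ and uses Parseval, $\sum_{i=1}^d(\mathbf U_i^\top\widetilde x)^2=\|\widetilde x\|_2^2$, whereas you use the projector $P_r=U_rU_r^\top$ and $I-P_r\succeq 0$, which avoids the basis completion.
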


\begin{proof}
Since $\{\mathbf U_i\}$ is orthonormal and $\|\widetilde x\|_2=1$,
\[
\|\pi_r(x)\|_2^2=\sum_{i=1}^r (\mathbf U_i^\top \widetilde x)^2
\le \sum_{i=1}^{d} (\mathbf U_i^\top \widetilde x)^2=\| \mathbf U^\top\widetilde x\|_2^2=\|\widetilde x\|_2^2=1.
\]
\end{proof}

\begin{Definition}[Surrogate training set]
The surrogate (embedded) training dataset is
\[
\widetilde{\mathcal D}_r := \{ \pi_r(x^{(j)})\otimes \mathbf e_{y^{(j)}} : 1\le j\le n\}\subset \R^r\otimes \mathcal Y_k.
\]
\end{Definition}

\begin{Remark}[Polar/spherical parametrizations]
Since $\pi_r(x)\in B_1^r(0)$, one may use polar (for $r=2$) or spherical coordinates (for $r\ge 3$)
to obtain interpretable low-dimensional parametrizations.
\end{Remark}


\section{Theoretical Analysis of the Density-Operator Construction}
\label{sec:theory_comparison}

This section develops the linear-algebraic interpretation and stability theory underlying the proposed density-operator construction.
We first introduce a class-normalized operator that separates the geometry of class-conditional categorical profiles from class prevalence,
thereby providing a convenient population viewpoint and enabling sharper perturbation statements.
We then show that the resulting class-to-class Gram matrix admits a Hellinger/Bhattacharyya kernel interpretation, linking the operator to
a well-known distributional geometry and clarifying connections with spectral kernel methods and correspondence-type factorizations.

From a matrix-theoretic perspective, the key question is how the dominant invariant subspace of the empirical operator behaves under
perturbations induced by finite-sample fluctuations in the underlying frequency profiles. We therefore combine a Davis--Kahan
$\sin\Theta$ subspace bound with an explicit high-probability control of the operator perturbation, yielding a quantitative stability
guarantee for the spectral embedding under multinomial sampling.
Finally, we relate the count-based and class-normalized constructions and show that they differ only through class-mass weighting, leading
to explicit norm bounds and conditions under which both operators induce essentially the same embedding.

\begin{Assumption}
    For clarity, we summarize the assumptions used in the theoretical analysis.
    Assumption~\ref{ass:population_and_gap} specifies the blockwise categorical sampling model and introduces the spectral gap
    condition $\lambda_r(\rho_\star)>\lambda_{r+1}(\rho_\star)$ required to control the stability of the dominant invariant subspace via
    Davis--Kahan type bounds. Assumption~\ref{ass:positivity_profiles} is a technical regularity condition ensuring uniform positivity of the
    class profiles; it guarantees that the entrywise square-root map is Lipschitz on a positive interval and allows us to derive explicit
    high-probability bounds for the perturbation norm $\|\widehat\rho_\star-\rho_\star\|_2$. Importantly, both assumptions are only needed to obtain
    quantitative stability estimates and are not required to define the operator construction or the embedding itself.
\end{Assumption}

\begin{Notation}
    To avoid ambiguity, we use $\rho_{\mathrm{CN}}$ for the finite-sample class-normalized operator constructed directly from the count matrix $F$
    (Definition~\ref{def:class_normalized_profiles}), $\rho_\star$ for its population counterpart under the blockwise categorical model, and
    $\widehat\rho_\star$ for the empirical estimator of $\rho_\star$ used in the perturbation analysis. Analogously, $\Psi$ denotes the population
    amplitude matrix and $\Psi_n$ its empirical counterpart in the probabilistic bounds.
\end{Notation}

\subsection{A normalized population model}
\label{subsec:normalized_population_model}

To obtain a mathematically clean comparison framework, we introduce a class-normalized variant of the frequency matrix.
This normalization separates the geometry of class-conditional profiles from class sizes and is particularly convenient for theoretical analysis.

\begin{Definition}[Class-normalized profiles, amplitude matrix, and class-normalized density operator]
\label{def:class_normalized_profiles}
Let $F=(\mathbf f_1\,|\,\mathbf f_2\,|\,\cdots\,|\,\mathbf f_k)\in\R^{d\times k}$ be the class-conditional count matrix, where
$\mathbf f_y\in\R_{\ge 0}^d$ collects the attribute counts for class $y$.
Assume $\mathbf 1^\top \mathbf f_y>0$ for all $1\le y\le k$ and define the class-normalized profiles
\[
\widehat p_y \in \R^d,\qquad
\widehat p_y(i):=\frac{F_{i,y}}{\sum_{j=1}^d F_{j,y}}
\quad (1\le i\le d,\ 1\le y\le k).
\]
Define the corresponding \emph{amplitude profiles} $\widehat\psi_y:=\sqrt{\widehat p_y}\in\R^d$ (entrywise square root) and the
amplitude matrix
\[
\widehat\Psi := [\widehat\psi_1\,\cdots\,\widehat\psi_k]\in\R^{d\times k}.
\]
Finally, define the associated \emph{class-normalized density operator}
\[
\rho_{\mathrm{CN}}
:= \frac{\widehat\Psi\widehat\Psi^\top}{\tr(\widehat\Psi\widehat\Psi^\top)}
\in\R^{d\times d}.
\]
Equivalently, since $\|\widehat\psi_y\|_2=1$ for each $y$, one has
\[
\tr(\widehat\Psi\widehat\Psi^\top)=k,
\qquad\text{hence}\qquad
\rho_{\mathrm{CN}}=\frac{1}{k}\widehat\Psi\widehat\Psi^\top.
\]
In later probabilistic statements (population/empirical perturbation analysis), we reserve the notation
$\widehat\rho_\star$ for the empirical estimator of the population class-normalized operator $\rho_\star$.
\end{Definition}

\paragraph{Relation to the count-based construction and an alternative statistical view}
The operator $\rho_{\mathrm{CN}}$ is a class-normalized analogue of
\[
\rho_{\mathcal D}=XX^\top/\tr(XX^\top),\qquad X=\sqrt{F},
\]
where the square root is taken entrywise.
In practice, both variants are useful: $\rho_{\mathcal D}$ retains class-size effects (through the class masses), while $\rho_{\mathrm{CN}}$
isolates the geometry of class-conditional categorical profiles and leads to sharper theoretical statements.

From a statistical perspective, $\rho_{\mathrm{CN}}$ can also be interpreted as an empirical uncentered second-moment operator of the normalized amplitude profiles.
Indeed, since $\tr(\widehat\Psi\widehat\Psi^\top)=k$, one has
\[
\rho_{\mathrm{CN}}
=
\frac{1}{k}\widehat\Psi\widehat\Psi^\top
=
\frac{1}{k}\sum_{y=1}^k \widehat\psi_y\,\widehat\psi_y^\top,
\]
which is precisely the empirical uncentered second-moment matrix of the vectors $\{\widehat\psi_y\}_{y=1}^k$.
If a centered covariance-type operator is desired, one may replace $\widehat\psi_y$ by $\widehat\psi_y-\bar\psi$, where
\[
\bar\psi:=\frac{1}{k}\sum_{y=1}^k \widehat\psi_y,
\]
and then renormalize accordingly.

\subsection{Hellinger geometry and a kernel-PCA viewpoint}
\label{subsec:hellinger_kernel_pca}

A distinctive aspect of the proposed construction is that the entrywise square-root lifting naturally induces a
Hellinger-type geometry on class-conditional categorical profiles. This subsection makes this connection explicit and
clarifies how the resulting spectral embedding can be interpreted as a kernel-PCA procedure acting on probability
vectors rather than on raw one-hot observations. The underlying similarity is the classical Bhattacharyya coefficient,
a standard affinity measure between probability distributions \cite{Bhattacharyya1943,Aherne1998}.

\begin{Definition}[Hellinger distance and Bhattacharyya affinity]
\label{def:hellinger_bhattacharyya}
Let
\[
\Delta^{d-1}
:= \Big\{ p \in \mathbb{R}^d \;\big|\; p_i \ge 0,\ \sum_{i=1}^d p_i = 1 \Big\}
\]
denote the \((d-1)\)-dimensional probability simplex. Let $p,q\in\Delta^{d-1}$ be discrete probability distributions on $\{1,\dots,d\}$.
The \emph{Bhattacharyya affinity} is
\[
\mathrm{BC}(p,q) := \sum_{i=1}^d \sqrt{p(i)\,q(i)} \in [0,1],
\]
and the \emph{Hellinger distance} is
\[
H(p,q) := \frac{1}{\sqrt{2}}\left\|\sqrt{p}-\sqrt{q}\right\|_2
= \sqrt{1-\mathrm{BC}(p,q)} \in [0,1].
\]
\end{Definition}

\begin{Remark}[Geometric meaning]
The map $\Phi:\Delta^{d-1}\to\R^d$, $\Phi(p)=\sqrt{p}$ (entrywise), embeds the simplex into the positive orthant of the unit sphere,
since $\|\sqrt{p}\|_2^2=\sum_i p(i)=1$.
Thus, $\mathrm{BC}(p,q)=\langle \Phi(p),\Phi(q)\rangle$ is the cosine similarity of the embedded points,
and $H(p,q)$ becomes the (scaled) Euclidean distance on the sphere. This classical viewpoint is widely used for frequency-coded data
and distributional comparisons \cite{Bhattacharyya1943,Aherne1998}.
\end{Remark}

\begin{Proposition}[Hellinger affinity kernel between classes]
\label{char:hellinger_kernel}
Let $\widehat\psi_y=\Phi(\widehat p_y)$ be the amplitude profile of class $y$.
Then the class-class Gram matrix $G\in\R^{k\times k}$ defined by
\[
G := \widehat\Psi^\top\widehat\Psi,\qquad
G_{yy'} = \langle \widehat\psi_y,\widehat\psi_{y'}\rangle
= \sum_{i=1}^d \sqrt{\widehat p_y(i)\,\widehat p_{y'}(i)},
\]
is the empirical Bhattacharyya affinity matrix between the class-conditional categorical distributions
$\widehat p_y$ and $\widehat p_{y'}$.
\end{Proposition}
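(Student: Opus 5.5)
The plan is to verify the identity entry by entry, directly from the definitions. No spectral or perturbation machinery is needed.

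First, fix $y,y'\in\{1,\dots,k\}$. By the definition of matrix multiplication, the $(y,y')$ entry of $G=\widehat\Psi^\top\widehat\Psi$ is the Euclidean inner product of the $y$-th and $y'$-th columns of $\widehat\Psi$. By Definition~\ref{def:class_normalized_profiles}, these columns are $\widehat\psi_y$ and $\widehat\psi_{y'}$, so
\[
G_{yy'}=\langle\widehat\psi_y,\widehat\psi_{y'}\rangle=\sum_{i=1}^d \widehat\psi_y(i)\,\widehat\psi_{y'}(i)=\sum_{i=1}^d\sqrt{\widehat p_y(i)}\sqrt{\widehat p_{y'}(i)}.
\]
Since the entries $\widehat p_y(i)$ are nonnegative, the product of square roots equals $\sqrt{\widehat p_y(i)\,\widehat p_{y'}(i)}$. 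This gives the displayed formula for $G_{yy'}$.

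Second, I check that the Bhattacharyya affinity of Definition~\ref{def:hellinger_bhattacharyya} actually applies, which requires $\widehat p_y\in\Delta^{d-1}$ for each $y$. Nonnegativity holds because $F$ has nonnegative counts. Summation to one holds because $\widehat p_y$ is $\mathbf f_y$ divided by $\mathbf 1^\top\mathbf f_y>0$, which is well defined under the standing assumption of Definition~\ref{def:class_normalized_profiles}. The formula obtained in the first step is then literally $\mathrm{BC}(\widehat p_y,\widehat p_{y'})$, equivalently $\langle\Phi(\widehat p_y),\Phi(\widehat p_{y'})\rangle$ as in the remark on geometric meaning.

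As consequences worth recording, I would note three facts. The diagonal entries satisfy $G_{yy}=\|\widehat\psi_y\|_2^2=1$. Every entry lies in $[0,1]$, by nonnegativity and the Cauchy--Schwarz inequality. And $G_{yy'}=1-H(\widehat p_y,\widehat p_{y'})^2$.

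There is no real obstacle. The only point requiring care is the simplex membership of the profiles, specifically the positivity assumption $\mathbf 1^\top\mathbf f_y>0$ that makes the normalization well defined.
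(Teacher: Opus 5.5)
Your proof is correct and matches the paper, which gives no separate proof because the identity is immediate from the definitions: the $(y,y')$ entry of $\widehat\Psi^\top\widehat\Psi$ is $\langle\widehat\psi_y,\widehat\psi_{y'}\rangle=\sum_i\sqrt{\widehat p_y(i)\,\widehat p_{y'}(i)}=\mathrm{BC}(\widehat p_y,\widehat p_{y'})$. Your explicit check that $\widehat p_y\in\Delta^{d-1}$ (using $\mathbf 1^\top\mathbf f_y>0$) and the consequences you record ($G_{yy}=1$, $G_{yy'}\in[0,1]$, $G_{yy'}=1-H(\widehat p_y,\widehat p_{y'})^2$) are all valid, and slightly more careful than the paper's treatment.
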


\begin{Proposition}[Kernel structure and positive semidefiniteness]
\label{prop:hellinger_kernel_psd}
The affinity map
\[
K_{\mathrm{Hel}}(p,q):=\sum_{i=1}^d\sqrt{p(i)\,q(i)} = \langle \Phi(p),\Phi(q)\rangle
\]
defines a positive semidefinite kernel on $\Delta^{d-1}$.
Consequently, $G$ is symmetric positive semidefinite.
\end{Proposition}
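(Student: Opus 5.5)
The proof will use the explicit feature map $\Phi$ from the Remark on geometric meaning: $K_{\mathrm{Hel}}$ is the Euclidean inner product of $\Phi$-embedded points, and any kernel of the form $K(p,q)=\langle\Phi(p),\Phi(q)\rangle$ is positive semidefinite. There are three steps.

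\textbf{Step 1: well-definedness and symmetry.} For $p,q\in\Delta^{d-1}$ all entries are nonnegative, so $\sqrt{p(i)q(i)}=\sqrt{p(i)}\sqrt{q(i)}$. This gives
\[
K_{\mathrm{Hel}}(p,q)=\sum_{i=1}^d \Phi(p)_i\,\Phi(q)_i=\langle\Phi(p),\Phi(q)\rangle ,
\]
and symmetry $K_{\mathrm{Hel}}(p,q)=K_{\mathrm{Hel}}(q,p)$ follows immediately.

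\textbf{Step 2: positive semidefiniteness of the kernel.} Fix finitely many points $p_1,\dots,p_m\in\Delta^{d-1}$ and coefficients $c_1,\dots,c_m\in\R$. By bilinearity of the inner product,
\[
\sum_{a,b=1}^m c_a c_b\,K_{\mathrm{Hel}}(p_a,p_b)=\Big\|\sum_{a=1}^m c_a\Phi(p_a)\Big\|_2^2\ge 0 .
\]
Hence every Gram matrix $[K_{\mathrm{Hel}}(p_a,p_b)]_{a,b}$ is symmetric positive semidefinite. This is exactly the definition of a positive semidefinite kernel, with $\R^d$ playing the role of the feature space.

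\textbf{Step 3: the consequence for $G$.} Each class profile $\widehat p_y$ lies in $\Delta^{d-1}$: its entries are nonnegative and sum to one by the normalization in Definition~\ref{def:class_normalized_profiles}, which assumes $\mathbf 1^\top\mathbf f_y>0$. By Proposition~\ref{char:hellinger_kernel}, $G_{yy'}=K_{\mathrm{Hel}}(\widehat p_y,\widehat p_{y'})$, so Step 2 applies with $m=k$ and shows that $G$ is symmetric positive semidefinite. One can also check this directly, in parallel with the proof of Proposition~\ref{prop:density_axioms}: $G=\widehat\Psi^\top\widehat\Psi$ is symmetric, and $c^\top G c=\|\widehat\Psi c\|_2^2\ge0$.

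No step presents a real obstacle. The only point needing care is Step 1: the factorization $\sqrt{p(i)q(i)}=\sqrt{p(i)}\sqrt{q(i)}$ relies on nonnegativity of the entries, which is what restricts the kernel to the simplex (or the nonnegative orthant).
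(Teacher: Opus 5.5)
Your proposal is correct and follows essentially the same argument as the paper: both write $K_{\mathrm{Hel}}(p,q)=\langle\Phi(p),\Phi(q)\rangle$, note that the quadratic form $\sum c_a c_b K_{\mathrm{Hel}}(p_a,p_b)$ equals $\|\sum_a c_a\Phi(p_a)\|_2^2\ge 0$, and then specialize to $G=\widehat\Psi^\top\widehat\Psi$. Your Step 2 is slightly more complete, since it treats arbitrary finite point sets in $\Delta^{d-1}$, which is what the kernel claim literally requires; the paper writes the quadratic form only for the class profiles $\widehat p_y$.
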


\begin{proof}
Since $K_{\mathrm{Hel}}(p,q)=\langle \Phi(p),\Phi(q)\rangle$ with $\Phi(p)=\sqrt{p}\in\R^d$, it is an inner-product kernel.
Therefore, for any coefficients $(a_y)_{y=1}^k$,
\[
\sum_{y,y'=1}^k a_y a_{y'} K_{\mathrm{Hel}}(\widehat p_y,\widehat p_{y'})
=
\left\|\sum_{y=1}^k a_y \sqrt{\widehat p_y}\right\|_2^2 \ge 0,
\]
which proves positive semidefiniteness. Hence $G=\widehat\Psi^\top\widehat\Psi\succeq 0$.
\end{proof}

\paragraph{Relation to kernel-PCA and correspondence analysis}
Kernel principal component analysis (kernel-PCA) constructs low-dimensional coordinates by eigendecomposition of a Gram matrix induced by a
positive semidefinite kernel \cite{Scholkopf1998}. Proposition~\ref{prop:hellinger_kernel_psd} shows that
$G=\widehat\Psi^\top\widehat\Psi$ is exactly the Gram matrix associated with the Hellinger/Bhattacharyya kernel applied to the
class-conditional profiles $(\widehat p_y)_{y=1}^k$. Hence, the spectral directions extracted from $\widehat\Psi\widehat\Psi^\top$
(or equivalently from $G$) admit a kernel-PCA interpretation in the feature space induced by the amplitude map $p\mapsto\sqrt{p}$, where each
class is represented by $\widehat\psi_y=\sqrt{\widehat p_y}$. This also clarifies the connection with correspondence analysis (CA) and
discriminant correspondence analysis (DCA), which derive low-dimensional geometries from contingency tables via SVD under $\chi^2$-type
normalizations \cite{Greenacre1984,DCA}. By contrast, the present construction replaces the $\chi^2$ geometry by the Hellinger geometry induced
by $\sqrt{\cdot}$, which is well suited for comparing discrete distributions via the Bhattacharyya affinity and Hellinger distance
\cite{Bhattacharyya1943,Aherne1998}. Consequently, the resulting spectral representation is naturally compatible with
operator-theoretic stability tools (Section~\ref{subsec:relation_two_operators}) and downstream probabilistic modeling in the latent space.

\subsection{Population operator and subspace stability (Davis--Kahan)}
\label{subsec:davis_kahan_stability}

A central question for any spectral embedding method is stability: how sensitive is the leading eigenspace of the
learned operator to perturbations induced by sampling noise, finite data, or modeling mismatches.
This issue is classical in matrix perturbation theory and is precisely quantified by the Davis--Kahan $\sin\Theta$ theorem,
which bounds the rotation of invariant subspaces of symmetric operators in terms of the perturbation size and a spectral gap.
For completeness, we recall a standard formulation adapted to our setting (see \cite{DavisKahan1970,YuWangSamworth2015}).

\begin{Theorem}[Davis--Kahan $\sin\Theta$ theorem]
\label{thm:davis_kahan_general}
Let $A,\widetilde A\in\R^{d\times d}$ be symmetric matrices and set $E:=\widetilde A-A$.
Fix $r\in\{1,\dots,d-1\}$ and assume a spectral gap
\[
\delta := \lambda_r(A)-\lambda_{r+1}(A) > 0,
\]
where $\lambda_1(A)\ge\lambda_2(A)\ge\cdots\ge\lambda_d(A)$ denote the eigenvalues of $A$.
Let $U_r\in\R^{d\times r}$ (resp.\ $\widetilde U_r\in\R^{d\times r}$) contain as columns the top-$r$ eigenvectors of $A$
(resp.\ $\widetilde A$). Then
\[
\bigl\|\sin\Theta(\widetilde U_r,U_r)\bigr\|_2
\;\le\;
\frac{\|E\|_2}{\delta}
\;=\;
\frac{\|\widetilde A-A\|_2}{\lambda_r(A)-\lambda_{r+1}(A)}.
\]
\end{Theorem}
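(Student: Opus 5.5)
The plan is to derive the bound from the Sylvester-equation form of the classical Davis--Kahan argument. Before starting I note a caveat: with the gap $\delta$ taken from $A$ alone, the constant $1$ cannot hold in full generality. A $2\times 2$ example shows this. Take $A=\diag(1,0)$, so $\delta=1$, and $E=\begin{pmatrix}-t& s\\ s& t\end{pmatrix}$ with $t\uparrow 1/2$ and $s\downarrow 0$. The angle $\theta$ of the top eigenvector satisfies $\tan 2\theta = 2s/(1-2t)$, which tends to infinity, so $\sin\theta\to 1/\sqrt2$. Meanwhile $\|E\|_2\to 1/2$, so $\sin\theta$ exceeds $\|E\|_2/\delta$. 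I would therefore prove the two rigorous forms below and read the displayed inequality through them. This is also how \cite{YuWangSamworth2015} state the population-gap version.

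\textbf{Step 1: Sylvester identity.} Write the eigendecompositions $A=U_r\Lambda_r U_r^\top+U_\perp\Lambda_\perp U_\perp^\top$ and $\widetilde A=\widetilde U_r\widetilde\Lambda_r\widetilde U_r^\top+\widetilde U_\perp\widetilde\Lambda_\perp\widetilde U_\perp^\top$. Recall that $\|\sin\Theta(\widetilde U_r,U_r)\|_2=\|U_\perp^\top\widetilde U_r\|_2$. Multiplying $\widetilde A\widetilde U_r=\widetilde U_r\widetilde\Lambda_r$ on the left by $U_\perp^\top$ and using $U_\perp^\top A=\Lambda_\perp U_\perp^\top$ gives
\[
(U_\perp^\top\widetilde U_r)\widetilde\Lambda_r-\Lambda_\perp (U_\perp^\top\widetilde U_r)=U_\perp^\top E\,\widetilde U_r .
\]

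\textbf{Step 2: separation lemma.} The spectra of $\widetilde\Lambda_r$ and $\Lambda_\perp$ are separated by $\eta:=\lambda_r(\widetilde A)-\lambda_{r+1}(A)$. When $\eta>0$, the standard lemma for diagonal Sylvester operators with separated spectra gives $\|Z\widetilde\Lambda_r-\Lambda_\perp Z\|_2\ge\eta\|Z\|_2$. To prove it, take a unit vector $v$ attaining $\|Z\|_2$, use the spectral calculus, and note that $\Lambda_\perp-\lambda I\preceq -\eta I$ on the relevant range. Combined with Step 1, this yields the classical Davis--Kahan form
\[
\|\sin\Theta\|_2\le\|E\|_2/\eta .
\]

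\textbf{Step 3: pass to the population gap.} Weyl's inequality gives $\lambda_r(\widetilde A)\ge\lambda_r(A)-\|E\|_2$, so $\eta\ge\delta-\|E\|_2$. This gives $\|\sin\Theta\|_2\le\|E\|_2/(\delta-\|E\|_2)$ whenever $\|E\|_2<\delta$. From this, the constant-$2$ form $\|\sin\Theta\|_2\le 2\|E\|_2/\delta$ follows by cases. If $\|E\|_2\le\delta/2$, the previous bound is at most $2\|E\|_2/\delta$. Otherwise $2\|E\|_2/\delta\ge 1\ge\|\sin\Theta\|_2$ trivially.

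The main obstacle is exactly this passage from the perturbed gap $\eta$ to the unperturbed gap $\delta$, where Weyl's inequality costs either a factor $2$ or a denominator $\delta-\|E\|_2$. The argument therefore establishes the stated inequality with right-hand side $\min\{2\|E\|_2/\delta,\;\|E\|_2/(\delta-\|E\|_2)\}$. It establishes the literal constant-$1$ version only when $\delta$ is read as the mixed gap $\lambda_r(\widetilde A)-\lambda_{r+1}(A)$. I would record this correction, noting that the downstream stability statements only change by an absolute constant.
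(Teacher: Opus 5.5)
The paper gives no proof of this theorem; it only recalls it, citing Davis--Kahan and Yu--Wang--Samworth. Your caveat is the central point and it is correct: as printed, with constant $1$ and the gap taken from $A$ alone, the statement is false. Your example is valid, and it is cleanest at $t=1/2$. Then $\widetilde A=\bigl(\begin{smallmatrix}1/2&s\\ s&1/2\end{smallmatrix}\bigr)$ has top eigenvector $(1,1)^\top/\sqrt2$ for every $s>0$, so $\|\sin\Theta\|_2=1/\sqrt2$. Meanwhile $\|E\|_2/\delta=\sqrt{1/4+s^2}<1/\sqrt2$ for $0<s<1/2$.

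The cited sources actually give the two forms you derive:
\begin{itemize}
\item the original $\sin\Theta$ theorem, with the mixed gap $\lambda_r(\widetilde A)-\lambda_{r+1}(A)$ and constant $1$;
\item the Yu--Wang--Samworth version, with the population gap and constant $2$.
\end{itemize}
Your Sylvester identity, the identification $\|\sin\Theta(\widetilde U_r,U_r)\|_2=\|U_\perp^\top\widetilde U_r\|_2$, the Weyl step and the case split are all correct. So your route is the standard derivation of what the cited results really say, and it proves the corrected statement.

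One step should be tightened. In the spectral norm, the separation lemma does not follow from a single unit vector $v$ with $\|Zv\|_2=\|Z\|_2$ and a single shift $\lambda$. Such a $v$ need not be an eigenvector of $\widetilde\Lambda_r$, and a column-by-column version of that argument gives the Frobenius-norm bound instead. Test the Sylvester residual against the top singular pair. Let $Zv=\sigma u$ and $Z^\top u=\sigma v$ with $u,v$ unit and $\sigma=\|Z\|_2$. Then $u^\top(Z\widetilde\Lambda_r-\Lambda_\perp Z)v=\sigma\,(v^\top\widetilde\Lambda_r v-u^\top\Lambda_\perp u)\ge\sigma\eta$, hence $\eta\|Z\|_2\le\|U_\perp^\top E\widetilde U_r\|_2\le\|E\|_2$. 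This uses only that the spectrum of $\widetilde\Lambda_r$ lies entirely above that of $\Lambda_\perp$, which is exactly why the constant is $1$ with the mixed gap.

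Finally, the correction propagates. Theorem~\ref{thm:davis_kahan_stability} and Corollary~\ref{cor:rho_hat_rho_embedding_equivalence} apply this statement verbatim, so their right-hand sides need either the factor $2$ or the reduced denominator $\delta_r-\|\widehat\rho_\star-\rho_\star\|_2$ (respectively the analogous reduced gap of $\rho_{\mathrm{CN}}$). This is the absolute-constant change you anticipate.
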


Theorem~\ref{thm:davis_kahan_general} shows that the leading $r$-dimensional eigenspace is stable whenever the perturbation
$\|\widetilde A-A\|_2$ is small compared to the eigen-gap $\lambda_r(A)-\lambda_{r+1}(A)$.
This provides a sharp and widely used tool to justify spectral embeddings under sampling fluctuations.

\begin{Assumption}[Blockwise categorical model and spectral gap]
\label{ass:population_and_gap}
Assume the training samples $(x^{(j)},y^{(j)})_{j=1}^n$ are i.i.d.
Conditioned on $Y=y$, the survey vector $x\in\{0,1\}^d$ is generated by independent categorical blocks as in the survey model,
and let $p_y\in\Delta^{d-1}$ denote the (population) class-conditional categorical profile.
Define the population amplitudes $\psi_y:=\sqrt{p_y}$ and $\Psi:=[\psi_1\,\cdots\,\psi_k]\in\R^{d\times k}$.
Let the population density operator be
\[
\rho_\star := \frac{\Psi\Psi^\top}{\tr(\Psi\Psi^\top)}.
\]
Fix an embedding dimension $r\le k$ and assume a spectral gap
\[
\delta_r = \lambda_r(\rho_\star) - \lambda_{r+1}(\rho_\star) > 0,
\]
where $\lambda_1(\rho_\star)\ge\lambda_2(\rho_\star)\ge\cdots$ denote the eigenvalues of $\rho_\star$.
\end{Assumption}

\begin{Notation}[Dominant eigenspaces]
\label{not:eigenspaces}
Let $U_{\star,r}\in\R^{d\times r}$ contain the top-$r$ eigenvectors of $\rho_\star$ as columns,
and let $\widehat U_{\star,r}\in\R^{d\times r}$ contain the top-$r$ eigenvectors of $\widehat\rho_\star$.
We denote by $\sin\Theta(\widehat U_{\star,r},U_{\star,r})$ the diagonal matrix of principal-angle sines
between the corresponding $r$-dimensional subspaces.
\end{Notation}

\begin{Theorem}[Spectral subspace stability]
\label{thm:davis_kahan_stability}
Under Assumption~\ref{ass:population_and_gap}, the dominant $r$-dimensional spectral subspace of the empirical operator
$\widehat\rho_\star$ is stable with respect to perturbations of the population operator $\rho_\star$ in the sense that
\[
\bigl\|\sin\Theta(\widehat U_{\star,r},U_{\star,r})\bigr\|_2
\;\le\;
\frac{\|\widehat\rho_\star-\rho_\star\|_2}{\delta_r}.
\]
\end{Theorem}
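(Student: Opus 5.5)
The plan is to obtain the statement as a direct specialization of Theorem~\ref{thm:davis_kahan_general}, taking $A:=\rho_\star$ and $\widetilde A:=\widehat\rho_\star$.

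\textbf{Step 1: hypotheses on the matrices.} Both operators must be real symmetric $d\times d$ matrices. For $\rho_\star$ this is clear, since $\Psi\Psi^\top$ is symmetric and $\tr(\Psi\Psi^\top)=k>0$; the last equality holds because each $\psi_y=\sqrt{p_y}$ has unit norm (Remark on geometric meaning). For $\widehat\rho_\star$, I will take it to be the class-normalized operator of Definition~\ref{def:class_normalized_profiles}, built from the empirical profiles. It has the form $\frac1k\widehat\Psi\widehat\Psi^\top$, so it is symmetric (and PSD) by the argument of Proposition~\ref{prop:density_axioms}. This step needs every class to be observed, that is, $\mathbf 1^\top\mathbf f_y>0$; the statement should be read on that event, and otherwise $\widehat\rho_\star$ is not defined.

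\textbf{Step 2: range of $r$ and the gap.} Theorem~\ref{thm:davis_kahan_general} requires $r\in\{1,\dots,d-1\}$. Assumption~\ref{ass:population_and_gap} gives $r\le k$, and the standing regime $d\gg k$ gives $k\le d-1$, so the index is admissible. The gap $\delta=\lambda_r(\rho_\star)-\lambda_{r+1}(\rho_\star)$ is exactly $\delta_r>0$. This holds even when $r=k$: by Proposition~\ref{prop:rank_bound_mdpi}, adapted to $\Psi$, we have $\lambda_{k+1}(\rho_\star)=0$, so the condition is just $\lambda_k(\rho_\star)>0$.

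\textbf{Step 3: conclusion.} With $E=\widehat\rho_\star-\rho_\star$, the top-$r$ eigenvector matrices $U_{\star,r}$ and $\widehat U_{\star,r}$ of Notation~\ref{not:eigenspaces} match the $U_r$ and $\widetilde U_r$ of Theorem~\ref{thm:davis_kahan_general}. Substituting gives the claimed bound. The principal angles depend only on the subspaces, not on the chosen bases, so the result is well defined even when $\widehat\rho_\star$ has a repeated eigenvalue at position $r$.

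\textbf{Main obstacle.} The real difficulty sits in Theorem~\ref{thm:davis_kahan_general} itself, which we may assume. The delicate point is that its denominator uses only the population gap. The classical Davis--Kahan form has a denominator that mixes population and perturbed eigenvalues, for instance $\lambda_r(\rho_\star)-\lambda_{r+1}(\widehat\rho_\star)$. If one insisted on deriving the result from that form, one would invoke the Yu--Wang--Samworth variant. That variant yields a bound with constant $2$, or equivalently the stated bound in the regime $\|E\|_2\le\delta_r/2$ up to constants. Since the paper states Theorem~\ref{thm:davis_kahan_general} with constant $1$, I will rely on it verbatim, and note that the bound is informative only when $\|\widehat\rho_\star-\rho_\star\|_2<\delta_r$. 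Indeed, $\|\sin\Theta\|_2\le1$ always, so the bound holds trivially otherwise.
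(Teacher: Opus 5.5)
Your proposal is correct and is the same argument as the paper's, which simply applies Theorem~\ref{thm:davis_kahan_general} with $A=\rho_\star$, $\widetilde A=\widehat\rho_\star$ and gap $\delta_r$; your checks of symmetry, of the admissible range of $r$, and of the case $r=k$ are sound details that the paper leaves implicit. Your caution about the constant is justified: for general symmetric (even unit-trace PSD) matrices the population-gap form needs the factor $2$ of Yu--Wang--Samworth (take $A=\diag(1,0)$ and $\widetilde A=\diag(\tfrac12-\epsilon,\tfrac12+\epsilon)$, so that $\|\sin\Theta\|_2=1$ while $\|\widetilde A-A\|_2/\delta=\tfrac12+\epsilon$), so both your argument and the paper's are only as strong as the cited constant-$1$ statement.
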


\begin{proof}
Apply Theorem~\ref{thm:davis_kahan_general} with $A=\rho_\star$ and $\widetilde A=\widehat\rho_\star$.
Then $E=\widehat\rho_\star-\rho_\star$ and the eigen-gap is $\delta_r=\lambda_r(\rho_\star)-\lambda_{r+1}(\rho_\star)$,
which yields the stated bound.
\end{proof}

\subsection{A perturbation bound for \texorpdfstring{$\|\widehat\rho_\star-\rho_\star\|_2$}{||rhohatstar-rhostar||2} under multinomial sampling}
\label{subsec:perturbation_bound}

To complement the Davis--Kahan stability estimate (Theorem~\ref{thm:davis_kahan_stability}), we provide a simple nonasymptotic bound on
$\|\widehat\rho_\star-\rho_\star\|_2$ under the blockwise categorical generator. We use the notation $\widehat\rho_\star$ for the empirical estimator
of the population class-normalized operator $\rho_\star$, to distinguish it from the finite-sample class-normalized operator
$\rho_{\mathrm{CN}}$ introduced in Definition~\ref{def:class_normalized_profiles}.

\begin{Assumption}[Strict positivity of class profiles]
\label{ass:positivity_profiles}
There exists $p_{\min}\in(0,1)$ such that, for all classes $y\in\{1,\dots,k\}$ and all coordinates $i\in\{1,\dots,d\}$,
\[
p_y(i)\ge p_{\min}.
\]
\end{Assumption}

\begin{Lemma}[From factor perturbations to operator perturbations]
\label{lem:rho_lipschitz_in_psi}
Let $\Psi,\Psi_n\in\R^{d\times k}$ and define
\[
\rho_\star:=\frac{1}{k}\Psi\Psi^\top,\qquad \widehat\rho_\star:=\frac{1}{k}\Psi_n\Psi_n^\top.
\]
Then
\[
\|\widehat\rho_\star-\rho_\star\|_2
\;\le\;
\frac{1}{k}\Bigl( \|\Psi_n-\Psi\|_2^2 + 2\|\Psi\|_2\,\|\Psi_n-\Psi\|_2 \Bigr).
\]
\end{Lemma}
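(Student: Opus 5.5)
The plan is to expand the product $\Psi_n\Psi_n^\top$ around $\Psi\Psi^\top$ by writing $\Psi_n=\Psi+\Delta$ with $\Delta:=\Psi_n-\Psi$. Then
\[
\Psi_n\Psi_n^\top-\Psi\Psi^\top
=(\Psi+\Delta)(\Psi+\Delta)^\top-\Psi\Psi^\top
=\Psi\Delta^\top+\Delta\Psi^\top+\Delta\Delta^\top ,
\]
so that $\widehat\rho_\star-\rho_\star=\frac1k\bigl(\Psi\Delta^\top+\Delta\Psi^\top+\Delta\Delta^\top\bigr)$. This identity is purely algebraic and requires no hypothesis on $\Psi$ or $\Psi_n$. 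In particular, neither the positivity Assumption~\ref{ass:positivity_profiles} nor the normalization of columns is needed here. The factor $1/k$ appears directly in the definitions of the two operators.

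Next I will bound each of the three terms in the spectral norm. I will use the triangle inequality, submultiplicativity $\|AB\|_2\le\|A\|_2\|B\|_2$, and invariance under transposition $\|A^\top\|_2=\|A\|_2$. This gives
\[
\|\Delta\Delta^\top\|_2 = \|\Delta\|_2^2,\qquad
\|\Psi\Delta^\top\|_2\le\|\Psi\|_2\|\Delta\|_2,\qquad
\|\Delta\Psi^\top\|_2\le\|\Delta\|_2\|\Psi\|_2 .
\]
The first equality holds because $\|\Delta\Delta^\top\|_2=\sigma_{\max}(\Delta)^2$; an inequality would suffice anyway. Summing the three bounds and multiplying by $1/k$ yields exactly
\[
\|\widehat\rho_\star-\rho_\star\|_2\le\frac1k\bigl(\|\Psi_n-\Psi\|_2^2+2\|\Psi\|_2\|\Psi_n-\Psi\|_2\bigr).
\]

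There is no genuine obstacle; the only care needed is bookkeeping. The two cross terms are transposes of one another, so each contributes $\|\Psi\|_2\|\Delta\|_2$, which accounts for the factor $2$. One could tighten the bound slightly by noting that $\Psi\Delta^\top+\Delta\Psi^\top$ is symmetric, but the stated form does not require this.
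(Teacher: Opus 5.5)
Your proposal is correct and follows essentially the same route as the paper: write $\Delta=\Psi_n-\Psi$, expand $\widehat\rho_\star-\rho_\star=\frac1k(\Psi\Delta^\top+\Delta\Psi^\top+\Delta\Delta^\top)$, and bound the three terms by the triangle inequality and submultiplicativity of the spectral norm. The paper attributes the final step to the ``Cauchy--Schwarz inequality,'' but it is the same estimate; your closing remark about tightening via symmetry is only an aside and does not affect the argument.
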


\begin{proof}
Write $\Delta:=\Psi_n-\Psi$. It can be verified that $\hat{\rho_\star} = \frac{1}{k}(\Psi+\Delta)(\Psi+\Delta)^\top$. Then:
\[
\widehat\rho_\star-\rho_\star
=
\frac{1}{k}\bigl((\Psi+\Delta)(\Psi+\Delta)^\top-\Psi\Psi^\top\bigr)
=
\frac{1}{k}\bigl(\Psi\Delta^\top+\Delta\Psi^\top+\Delta\Delta^\top\bigr).
\]
Taking spectral norms and using the the Cauchy-Schwarz inequality yields:
\[
\|\widehat\rho_\star-\rho_\star\|_2
\le
\frac{1}{k}\bigl(2\|\Psi\|_2\|\Delta\|_2+\|\Delta\|_2^2\bigr),
\]
which is the claim.
\end{proof}

\begin{Theorem}[High-probability bound for $\|\widehat\rho_\star-\rho_\star\|_2$]
\label{thm:rho_perturbation_multinomial}
Assume the samples are generated as follows: for each class $y$, we observe $n_y\ge 1$ i.i.d.\ labeled samples whose one-hot survey vectors
follow the blockwise categorical model, and let $p_y\in\Delta^{d-1}$ be the theoretical class-conditional profile on the one-hot coordinates.
Let $\widehat p_y$ denote the empirical profile (relative frequencies) in class $y$, and define
\[
\psi_y:=\sqrt{p_y},\qquad \widehat\psi_y:=\sqrt{\widehat p_y}
\quad\text{(entrywise)},
\]
together with
\[
\Psi=[\psi_1,\dots,\psi_k],\qquad \Psi_n=[\widehat\psi_1,\dots,\widehat\psi_k],\qquad
\rho_\star=\frac{1}{k}\Psi\Psi^\top,\qquad \widehat\rho_\star=\frac{1}{k}\Psi_n\Psi_n^\top.
\]
Under Assumption~\ref{ass:positivity_profiles}, let
\[
n_{\min}:=\min_{1\le y\le k} n_y \geq 0.
\]
Fix $\delta\in(0,1)$ and assume, in addition, that
\begin{equation}
\label{eq:positivity_transfer_condition}
\sqrt{\frac{\log(4dk/\delta)}{2n_{\min}}}\le \frac{p_{\min}}{2}.
\end{equation}
Then, with probability at least $1-\delta$, the following hold simultaneously:
\begin{align}
\max_{1\le y\le k}\|\widehat p_y-p_y\|_\infty
&\le
\sqrt{\frac{\log(4dk/\delta)}{2n_{\min}}}, \label{eq:uniform_profile_bound_nmin}\\[0.5em]
\|\widehat\rho_\star-\rho_\star\|_2
&\le
\frac{1}{k}\Bigl(2\|\Psi\|_2\,\varepsilon+\varepsilon^2\Bigr), \label{eq:rho_bound_eps}
\end{align}
where
\begin{equation}
\label{eq:epsilon_def_nmin}
\varepsilon
:=
,
\sqrt{\frac{kd\log(4dk/\delta)}{4p_{\min}n_{\min}}}.
\end{equation}
\end{Theorem}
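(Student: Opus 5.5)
The proof has three steps: a uniform entrywise concentration bound for the empirical profiles, a transfer of that bound to the amplitudes through the square root, and a final application of Lemma~\ref{lem:rho_lipschitz_in_psi}. The first two steps produce the matrix perturbation $\Delta:=\Psi_n-\Psi$ and a bound $\|\Delta\|_2\le\varepsilon$. The third is then immediate: Lemma~\ref{lem:rho_lipschitz_in_psi} applied to $\Psi$ and $\Psi_n$, using that $t\mapsto 2\|\Psi\|_2t+t^2$ is increasing, gives \eqref{eq:rho_bound_eps}. So everything hinges on \eqref{eq:uniform_profile_bound_nmin} and on controlling $\|\Delta\|_2$.

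\textbf{Step 1 (uniform profile bound).} Each coordinate $\widehat p_y(i)$ is an average over the $n_y$ i.i.d. samples of class $y$. The block normalization by $q$ is deterministic, since every sample contributes exactly $q$ ones, so the per-sample summand lies in an interval of length at most $1$ and has mean $p_y(i)$. Hoeffding's inequality then gives
\[
\mathbb P\bigl(|\widehat p_y(i)-p_y(i)|>t\bigr)\le 2\exp(-2n_yt^2)\le 2\exp(-2n_{\min}t^2).
\]
A union bound over the $dk$ pairs $(i,y)$ with $t=\sqrt{\log(4dk/\delta)/(2n_{\min})}$ bounds the total failure probability by $2dk\cdot\delta/(4dk)=\delta/2\le\delta$. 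This yields \eqref{eq:uniform_profile_bound_nmin} on an event $\mathcal E$ with $\mathbb P(\mathcal E)\ge 1-\delta$. All remaining steps are deterministic on $\mathcal E$, which is why the two conclusions hold simultaneously.

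\textbf{Step 2 (square-root transfer).} On $\mathcal E$, condition \eqref{eq:positivity_transfer_condition} together with Assumption~\ref{ass:positivity_profiles} gives $\widehat p_y(i)\ge p_{\min}/2$. In fact Lipschitz control only needs $p_y(i)\ge p_{\min}$, via the identity
\[
|\sqrt a-\sqrt b|=\frac{|a-b|}{\sqrt a+\sqrt b}\le \frac{|a-b|}{\sqrt{p_{\min}}}.
\]
Hence every entry of $\Delta$ satisfies $|\Delta_{iy}|\le t/\sqrt{p_{\min}}$. Bounding the spectral norm by the Frobenius norm over the $dk$ entries gives
\[
\|\Delta\|_2\le\|\Delta\|_F\le \sqrt{dk}\,\frac{t}{\sqrt{p_{\min}}}=\sqrt{\frac{kd\log(4dk/\delta)}{2p_{\min}n_{\min}}}.
\]

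\textbf{Main obstacle: the constant in $\varepsilon$.} The bound just obtained carries a factor $2$ in the denominator, whereas \eqref{eq:epsilon_def_nmin} carries $4$, so the crude estimate misses by a factor $\sqrt2$. Using the denominator $\sqrt a+\sqrt b\ge\sqrt{p_{\min}}+\sqrt{p_{\min}/2}$ recovers only a constant $1/(1+2^{-1/2})^2\approx0.34$ in place of the required $1/4$, so it still falls short.

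I would first try to improve the entrywise variance. The sharpening has to come from the fact that $\widehat p_y(i)$ averages indicators rescaled by $1/q$, so the summand range is $1/q$ rather than $1$. Hoeffding then gives deviations $t/q$, which comfortably beats the factor $\sqrt2$ once $q\ge2$. If $q=1$, I would instead tighten the Frobenius step using column-wise sums, since $\sum_i|\widehat p_y(i)-p_y(i)|$ is constrained by both vectors summing to one. Failing either route, the fallback is to state $\varepsilon$ with the constant that Steps 1 and 2 actually produce. With the bound on $\|\Delta\|_2$ in hand, Lemma~\ref{lem:rho_lipschitz_in_psi} finishes the proof as described at the start.
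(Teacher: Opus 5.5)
There is a gap, though it is self-inflicted. Your architecture matches the paper's: Hoeffding plus a union bound on an event $\mathcal E$, then the square-root transfer, then Lemma~\ref{lem:rho_lipschitz_in_psi} with monotonicity in $\|\Delta\|_2$. Steps 1 and 3 are correct. But as written you do not prove the stated $\varepsilon$: you end with an unresolved ``main obstacle'' and offer a fallback that would weaken the theorem.

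The missing step is to use the positivity transfer you already derived. On $\mathcal E$, both $p_y(i)$ and $\widehat p_y(i)$ lie in $[p_{\min}/2,1]$, so $\sqrt a+\sqrt b\ge 2\sqrt{p_{\min}/2}=\sqrt{2p_{\min}}$. In other words, the square root is $1/\sqrt{2p_{\min}}$-Lipschitz there. This is exactly what condition \eqref{eq:positivity_transfer_condition} is for, and your remark that Lipschitz control ``only needs $p_y(i)\ge p_{\min}$'' discards it. With it, $|\Delta_{iy}|\le t/\sqrt{2p_{\min}}$, and
$\|\Delta\|_2\le\|\Delta\|_F\le\sqrt{dk}\,t/\sqrt{2p_{\min}}=\sqrt{kd\log(4dk/\delta)/(4p_{\min}n_{\min})}=\varepsilon$ exactly. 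That is the paper's Step 2.

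Your own refinement already closes the argument; the obstacle comes from an arithmetic slip. Replacing $\sqrt{p_{\min}}$ by $\sqrt{p_{\min}}+\sqrt{p_{\min}/2}=\sqrt{p_{\min}}\,(1+2^{-1/2})$ multiplies your crude bound $\|\Delta\|_F^2\le kd\log(4dk/\delta)/(2p_{\min}n_{\min})$ by $(1+2^{-1/2})^{-2}\approx 0.34$. This gives $\|\Delta\|_F^2\le kd\log(4dk/\delta)/\bigl((3+2\sqrt2)\,p_{\min}n_{\min}\bigr)\le\varepsilon^2$. You compared $0.34$ with $1/4$, but relative to your denominator-$2$ bound you only need a factor $1/2$. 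The $1/2$ contributed by $t^2=\log(4dk/\delta)/(2n_{\min})$ seems to have been dropped.

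The speculative repairs should go:
\begin{itemize}
\item The range-$1/q$ Hoeffding sharpening is valid for $q\ge 2$ but unnecessary.
\item The column-sum idea for $q=1$ would not help. Under the hypotheses, $t\le p_{\min}/2\le 1/(2d)$, so the sup-norm route's $\sum_i(\widehat p_y(i)-p_y(i))^2\le dt^2\le t/2$ already beats the $\ell_1$-based bound $2t$.
\item Restating $\varepsilon$ with a worse constant would prove a weaker result than the one claimed.
\end{itemize}
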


\begin{proof}
We proceed in three steps.

\medskip
\noindent\emph{Step 1: uniform concentration of empirical class profiles.}
Fix a class $y$ and a coordinate $i$. Conditional on the class count $n_y$, the empirical frequency $\widehat p_y(i)$ is the sample mean of $n_y$
i.i.d.\ Bernoulli random variables with mean $p_y(i)$, with values between $p_{min}$ and 1. Hence Hoeffding's inequality \cite{Hoeffding1963,BoucheronLugosiMassart2013} gives, for every $t>0$,

    
    
    
    
    
    
    
    
    
    

\[
\Pr\!\Bigl(\bigl|\widehat p_y(i)-p_y(i)\bigr|\ge t \Bigr)
\le
2\exp(-2n_y t^2)
\le
2\exp(-2n_{\min} t^2).
\]

A union bound over $y\in\{1,\dots,k\}$ and $i\in\{1,\dots,d\}$ yields
\[
\Pr\!\Bigl(\max_{y,i} |\widehat p_y(i)-p_y(i)|\ge t\Bigr)
\le
2dk\,\exp\Bigl(\frac{-2t^2}{n_{\min}}\Bigr).
\]
Choosing
\[
t_\delta:=\sqrt{\frac{\log(2dk/\delta)}{2n_{\min}}}
\]
makes the right-hand side at most $\delta$. Therefore, with probability at least $1-\delta$,
\[
\max_{1\le y\le k}\|\widehat p_y-p_y\|_\infty \le t_\delta.
\]
which proves  \eqref{eq:uniform_profile_bound_nmin}.

\medskip
\noindent\emph{Step 2: positivity transfer and control of $\|\Psi_n-\Psi\|_2$.}
On the event from Step~1, condition \eqref{eq:positivity_transfer_condition} implies, for $\delta >0$, with probability $1-\delta$, that
\[
\widehat p_y(i)\ge p_y(i)-\sqrt{\frac{\log(4dk/\delta)}{2n_{\min}}}
\ge p_{\min}-\frac{p_{\min}}{2}
=\frac{p_{\min}}{2}
\qquad \forall\, y,i.
\]
Hence both $p_y(i)$ and $\widehat p_y(i)$ lie in $[p_{\min}/2,1]$, and the square root is known to be a Lipschitz function \cite{rudin1976principles} in that interval as $p_{min} \geq 0$. More precisely, it can be shown that: 
\[
\|\widehat\psi_y-\psi_y\|_2
\le
\frac{1}{\sqrt{2p_{\min}}}\,\|\widehat p_y-p_y\|_2
\le
\frac{\sqrt d}{\sqrt{2p_{\min}}}\,\|\widehat p_y-p_y\|_\infty.
\]
Therefore,
\[
\|\Psi_n-\Psi\|_2
\le
\|\Psi_n-\Psi\|_F
=
\Bigl(\sum_{y=1}^k \|\widehat\psi_y-\psi_y\|_2^2\Bigr)^{1/2}
\le
\sqrt{k}\,\frac{\sqrt d}{\sqrt{2p_{\min}}}\,\max_y\|\widehat p_y-p_y\|_\infty.
\]
Combining with \eqref{eq:uniform_profile_bound_nmin} gives, with probability $1 - \delta$, that

\[
\|\Psi_n-\Psi\|_2 \le \varepsilon,
\]
where $\varepsilon$ is defined in \eqref{eq:epsilon_def_nmin}.

\medskip
\noindent\emph{Step 3: operator perturbation bound.}
Applying Lemma~\ref{lem:rho_lipschitz_in_psi} and the estimate $\|\Psi_n-\Psi\|_2\le \varepsilon$, we obtain
\[
\|\widehat\rho_\star-\rho_\star\|_2
\le
\frac{1}{k}\Bigl(2\|\Psi\|_2\,\varepsilon+\varepsilon^2\Bigr),
\]
which is \eqref{eq:rho_bound_eps}.
\end{proof}

\begin{Remark}
    The positivity assumption is used to obtain a uniform Lipschitz constant for the entrywise square-root map $p\mapsto \sqrt p$.
    More precisely, one needs the empirical profiles $\widehat p_y$ to remain in a positive neighborhood of the population profiles.
    Condition \eqref{eq:positivity_transfer_condition} is a sufficient sample-size requirement ensuring this property on the high-probability event.
    In practice, the same effect can also be enforced by standard additive smoothing.

    
    Combining Theorem~\ref{thm:rho_perturbation_multinomial} with the Davis--Kahan subspace perturbation estimate
    (Theorem~\ref{thm:davis_kahan_stability}) yields an explicit (albeit conservative) high-probability control of the principal-angle distance
    between the empirical and population DMM embedding subspaces in terms of $(d,k,n_{\min},p_{\min})$ and the spectral gap.
\end{Remark}

\begin{Corollary}[Stability of latent coordinates up to orthogonal alignment]
\label{cor:latent_coordinate_stability}
Let
\[
\pi_{\star,r}(x):=U_{\star,r}^\top \widetilde x,
\qquad
\widehat\pi_{\star,r}(x):=\widehat U_{\star,r}^\top,
\]
be the population and empirical spectral coordinates of a normalized survey vector,  $\widetilde x:=x/\sqrt q$.
Then there exists an absolute constant $C>0$ such that, for all $x\in\mathcal X_d$,
\[
\inf_{R\in O(r)}
\bigl\|\widehat\pi_{\star,r}(x)-R^\top\pi_{\star,r}(x)\bigr\|_2
\;\le\;
C\,\bigl\|\sin\Theta(\widehat U_{\star,r},U_{\star,r})\bigr\|_2,
\]
where $O(r)$ denotes the orthogonal group in $\R^{r\times r}$.
\end{Corollary}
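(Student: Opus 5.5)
The plan is to reduce the statement to a standard fact about aligning orthonormal bases of two $r$-dimensional subspaces. Throughout, I read the empirical coordinates as $\widehat\pi_{\star,r}(x)=\widehat U_{\star,r}^\top\widetilde x$; the statement omits the $\widetilde x$, which is evidently a typo. I also work with the convention that the infimum is over alignments $R\in O(r)$ acting so that $\widehat U_{\star,r}$ is compared with $U_{\star,r}R$. Then
\[
\widehat\pi_{\star,r}(x)-R^\top\pi_{\star,r}(x)=\bigl(\widehat U_{\star,r}-U_{\star,r}R\bigr)^\top\widetilde x .
\]
Since $\|\widetilde x\|_2=1$ by the normalization in the definition of the normalized survey vector, we get
\[
\bigl\|\widehat\pi_{\star,r}(x)-R^\top\pi_{\star,r}(x)\bigr\|_2\le \|\widehat U_{\star,r}-U_{\star,r}R\|_2
\]
uniformly in $x\in\mathcal X_d$. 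It therefore suffices to show
\[
\inf_{R\in O(r)}\|\widehat U_{\star,r}-U_{\star,r}R\|_2\le C\,\|\sin\Theta(\widehat U_{\star,r},U_{\star,r})\|_2
\]
with an absolute constant $C$, for instance $C=\sqrt2$.

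The key step is the Procrustes alignment. Take the SVD $U_{\star,r}^\top\widehat U_{\star,r}=W_1\cos\Theta\,W_2^\top$, whose singular values are the cosines of the principal angles, and choose $R:=W_1W_2^\top\in O(r)$. Write $\widehat U:=\widehat U_{\star,r}$ and $U:=U_{\star,r}$, and decompose the difference along $\mathrm{span}(U)$ and its orthogonal complement:
\[
\widehat U-UR=UU^\top\widehat U-UR+(I-UU^\top)\widehat U .
\]
The first term has norm
\[
\|U^\top\widehat U-R\|_2=\|W_1(\cos\Theta-I)W_2^\top\|_2=\max_i(1-\cos\theta_i)\le\max_i\sin^2\theta_i\le\|\sin\Theta\|_2 ,
\]
using $1-\cos\theta\le 1-\cos^2\theta=\sin^2\theta$ for $\theta\in[0,\pi/2]$. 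The second term has norm $\|(I-UU^\top)\widehat U\|_2=\|\sin\Theta\|_2$, the standard identification of the sine of the principal angles with the projection onto the complement.

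Since the two pieces lie in orthogonal ranges, for any unit vector $v$ we have
\[
\|(\widehat U-UR)v\|_2^2\le\|\sin\Theta\|_2^2+\|\sin\Theta\|_2^2 ,
\]
which gives the bound with $C=\sqrt2$. The crude triangle inequality alone would give $C=2$.

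I expect the main obstacle to be bookkeeping rather than depth. First, the sine identity $\|(I-UU^\top)\widehat U\|_2=\|\sin\Theta\|_2$ should be justified via $\widehat U^\top(I-UU^\top)\widehat U=I-W_2\cos^2\Theta\, W_2^\top$. Second, one must be careful about which side the rotation $R$ acts on, so that the alignment matches $R^\top\pi_{\star,r}(x)$ as written. The rest follows from $\|\widetilde x\|_2=1$. Combining the result with Theorem~\ref{thm:davis_kahan_stability} and Theorem~\ref{thm:rho_perturbation_multinomial} then yields explicit high-probability coordinate stability.
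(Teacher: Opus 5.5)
Your proposal is correct and follows the same route as the paper. Both use $\|\widetilde x\|_2=1$ to reduce to $\|(\widehat U_{\star,r}-U_{\star,r}R)^\top\widetilde x\|_2\le\|\widehat U_{\star,r}-U_{\star,r}R\|_2$, and then bound $\inf_{R\in O(r)}\|\widehat U_{\star,r}-U_{\star,r}R\|_2$ by $\|\sin\Theta(\widehat U_{\star,r},U_{\star,r})\|_2$. The paper only asserts this last step with an unspecified constant, attributing it to Davis--Kahan even though it is an elementary fact about principal angles; your Procrustes alignment via the SVD of $U_{\star,r}^\top\widehat U_{\star,r}$, together with the orthogonal splitting, actually proves it with the explicit constant $C=\sqrt2$.
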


\begin{proof}
Fix $x\in\mathcal X_d$ and let $R\in O(r)$. Since $\|\widetilde x\|_2=1$,
\[
\|\widehat\pi_{\star,r}(x)-R^\top\pi_{\star,r}(x)\|_2
=
\|\widehat U_{\star,r}^\top\widetilde x-R^\top U_{\star,r}^\top\widetilde x\|_2
=
\|(\widehat U_{\star,r}-U_{\star,r}R)^\top \widetilde x\|_2
\le
\|\widehat U_{\star,r}-U_{\star,r}R\|_2.
\]
Taking the infimum over $R\in O(r)$ gives
\[
\inf_{R\in O(r)}\|\widehat\pi_{\star,r}(x)-R^\top\pi_{\star,r}(x)\|_2
\le
\inf_{R\in O(r)}\|\widehat U_{\star,r}-U_{\star,r}R\|_2.
\]
The latter is controlled (up to an absolute constant) by the principal-angle distance through the Davis–Kahan $\sin \Theta$ theorem:
\[
\inf_{R\in O(r)}\|\widehat U_{\star,r}-U_{\star,r}R\|_2
\le
C\,\|\sin\Theta(\widehat U_{\star,r},U_{\star,r})\|_2,
\]
which proves the claim.
\end{proof}

\begin{Remark}[A principled comparison lever]
Theorem~\ref{thm:davis_kahan_stability} and Corollary~\ref{cor:latent_coordinate_stability} provide a rigorous mechanism
to compare DMM-type spectral embeddings with alternative dimensionality reduction procedures:
in the presence of a spectral gap, the learned embedding is stable under sampling fluctuations and perturbations.
This is particularly relevant in sparse high-cardinality regimes, where distance-based methods (e.g.\ KNN in the ambient one-hot space)
may exhibit large variance due to concentration phenomena, while the DMM embedding is controlled by an operator perturbation bound.
\end{Remark}

\subsection{A minimal worked example}
\label{subsec:toy_example_bhattacharyya}

\begin{Example}[Bhattacharyya affinities in a toy two-class setup]
\label{ex:toy_bhattacharyya}
Consider $k=2$ classes with empirical profiles $\widehat p_1,\widehat p_2\in\Delta^{d-1}$.
Then
\[
G_{12}=\sum_{i=1}^d\sqrt{\widehat p_1(i)\widehat p_2(i)}\in[0,1]
\]
quantifies class similarity: $G_{12}=1$ if and only if $\widehat p_1=\widehat p_2$, while $G_{12}=0$ when the supports are disjoint.
In this setting, the dominant spectral directions of $\rho_{\mathrm{CN}}$ typically emphasize coordinates contributing most to class separation.
\end{Example}

\subsection{Relation between the count-based and class-normalized operators}
\label{subsec:relation_two_operators}

\begin{Notation}[Class masses and normalized profiles]
\label{not:class_masses_profiles}
Let $F\in\R^{d\times k}$ be the class-conditional count matrix with columns $\mathbf f_y$.
Define the class mass (total count) of class $y$ by
\[
s_y := \sum_{i=1}^d F_{i,y} = \mathbf 1^\top \mathbf f_y,
\qquad
S := \sum_{y=1}^k s_y.
\]
Assume $s_y>0$ for all $y$.
Define the class-normalized categorical profiles
\[
p_y := \frac{\mathbf f_y}{s_y}\in\Delta^{d-1},
\qquad
\psi_y := \sqrt{p_y}\in\R^d,
\qquad
\Psi := [\psi_1\,\cdots\,\psi_k]\in\R^{d\times k},
\]
where the square root is taken entrywise.
\end{Notation}

\begin{Proposition}[Exact decomposition and imbalance weights]
\label{prop:exact_relation_rho_hat_rho}
With the notation of Definition~\ref{def:class_normalized_profiles} and Notation~\ref{not:class_masses_profiles}, the two operators admit the explicit forms
\[
\rho_{\mathcal D} \;=\; \frac{1}{S}\,\Psi\,\diag(s_1,\dots,s_k)\,\Psi^\top
\;=\;
\sum_{y=1}^k w_y\,\psi_y\psi_y^\top,
\qquad
w_y := \frac{s_y}{S},
\]
and
\[
\rho_{\mathrm{CN}} \;=\; \frac{1}{k}\,\Psi\Psi^\top
\;=\;
\frac{1}{k}\sum_{y=1}^k \psi_y\psi_y^\top.
\]
In particular, $\rho_{\mathcal D}$ is a convex combination of rank-one matrices $\psi_y\psi_y^\top$ weighted by the
empirical class masses $w_y$, whereas $\rho_{\mathrm{CN}}$ assigns uniform weights $1/k$.
\end{Proposition}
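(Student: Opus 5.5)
The plan is a direct computation: write the count-based amplitude matrix $X=\sqrt{F}$ in terms of the normalized amplitudes $\psi_y$ and the class masses $s_y$, and then substitute into Definition~\ref{def:density_matrix} and Definition~\ref{def:class_normalized_profiles}.

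\textbf{Step 1: factor $X$ columnwise.} Since $\mathbf f_y=s_y p_y$ with $s_y>0$, and the entrywise square root is multiplicative on nonnegative reals, the $y$-th column of $X=\sqrt F$ is
\[
\sqrt{\mathbf f_y}=\sqrt{s_y}\,\sqrt{p_y}=\sqrt{s_y}\,\psi_y .
\]
Hence $X=\Psi D^{1/2}$, where $D:=\diag(s_1,\dots,s_k)$.

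\textbf{Step 2: the numerator.} From the factorization,
\[
XX^\top=\Psi D\Psi^\top=\sum_{y=1}^k s_y\,\psi_y\psi_y^\top .
\]

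\textbf{Step 3: the trace.} By linearity and cyclicity of the trace,
\[
\tr(XX^\top)=\sum_{y=1}^k s_y\,\psi_y^\top\psi_y=\sum_{y=1}^k s_y\|\psi_y\|_2^2 .
\]
Each $\|\psi_y\|_2^2=\sum_i p_y(i)=1$ because $p_y\in\Delta^{d-1}$, so $\tr(XX^\top)=S$. Dividing Step 2 by $S$ gives the first formula with $w_y=s_y/S$. Directly, $\tr(XX^\top)=\sum_{i,y}F_{i,y}=S$, which serves as a consistency check.

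\textbf{Step 4: the class-normalized operator.} Here I only need to observe that the profiles $\widehat p_y$ of Definition~\ref{def:class_normalized_profiles} coincide with $p_y=\mathbf f_y/s_y$ of Notation~\ref{not:class_masses_profiles}. Hence $\widehat\Psi=\Psi$, and the identity $\rho_{\mathrm{CN}}=\frac1k\Psi\Psi^\top=\frac1k\sum_y\psi_y\psi_y^\top$ follows from the trace computation already recorded in that definition.

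\textbf{Step 5: convexity.} The weights satisfy $w_y>0$ and $\sum_y w_y=1$ by the definition of $S$, so $\rho_{\mathcal D}$ is a convex combination of the rank-one matrices $\psi_y\psi_y^\top$. Each of these is a unit-trace rank-one projector since $\|\psi_y\|_2=1$. The uniform weights $1/k$ for $\rho_{\mathrm{CN}}$ are read off directly.

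\textbf{Main obstacle.} There is no genuine obstacle. The only point needing care is notational: $\Psi$ and $p_y$ denote the empirical profiles here, not the population ones of Assumption~\ref{ass:population_and_gap}. I would state this explicitly to avoid confusion with $\rho_\star$.
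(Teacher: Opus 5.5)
Your proposal is correct and follows the paper's proof essentially step for step. Both factor $X=\Psi\,\diag(\sqrt{s_1},\dots,\sqrt{s_k})$, obtain $\tr(XX^\top)=S$ from $\|\psi_y\|_2=1$, and use $\tr(\Psi\Psi^\top)=k$ for $\rho_{\mathrm{CN}}$; your explicit checks that $\widehat\Psi=\Psi$ and that the $w_y$ are positive and sum to one fill in points the paper leaves implicit.
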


\begin{proof}
Since $\mathbf f_y = s_y p_y$, we have entrywise
\[
\sqrt{\mathbf f_y} = \sqrt{s_y}\,\sqrt{p_y} = \sqrt{s_y}\,\psi_y.
\]
Hence
\[
X = [\sqrt{\mathbf f_1}\,\cdots\,\sqrt{\mathbf f_k}]
   = [\sqrt{s_1}\psi_1\,\cdots\,\sqrt{s_k}\psi_k]
   = \Psi\,\diag(\sqrt{s_1},\dots,\sqrt{s_k}).
\]
Therefore
\[
XX^\top = \Psi\,\diag(s_1,\dots,s_k)\,\Psi^\top.
\]
Moreover, $\|\psi_y\|_2^2=\sum_{i=1}^d p_y(i)=1$, so
\[
\tr(XX^\top)=\sum_{y=1}^k s_y\,\|\psi_y\|_2^2=\sum_{y=1}^k s_y = S.
\]
This yields $\rho_{\mathcal D}=\frac{1}{S}\Psi\diag(s_1,\dots,s_k)\Psi^\top=\sum_y (s_y/S)\psi_y\psi_y^\top$.

For the class-normalized operator, note that $\tr(\Psi\Psi^\top)=\sum_{y=1}^k\|\psi_y\|_2^2=k$, hence
\[
\rho_{\mathrm{CN}} = \frac{1}{k}\Psi\Psi^\top = \frac{1}{k}\sum_y \psi_y\psi_y^\top.
\]
\end{proof}

\begin{Proposition}[Balanced classes imply equivalence]
\label{prop:balanced_equivalence}
If the class masses are balanced, i.e.\ $s_y = S/k$ for all $y$, then
\[
\rho_{\mathcal D} = \rho_{\mathrm{CN}}.
\]
\end{Proposition}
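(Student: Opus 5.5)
The plan is to obtain this directly from the explicit forms in Proposition~\ref{prop:exact_relation_rho_hat_rho}, which express both operators in terms of the same amplitude matrix $\Psi=[\psi_1\,\cdots\,\psi_k]$. Recall that
\[
\rho_{\mathcal D}=\sum_{y=1}^k w_y\,\psi_y\psi_y^\top,\qquad w_y=\frac{s_y}{S},
\qquad\text{and}\qquad
\rho_{\mathrm{CN}}=\frac1k\sum_{y=1}^k\psi_y\psi_y^\top .
\]
Both expressions use the same rank-one terms $\psi_y\psi_y^\top$. The only difference between the two operators is therefore the weight vector: $(w_y)_y$ for $\rho_{\mathcal D}$ and the uniform weights $(1/k)_y$ for $\rho_{\mathrm{CN}}$.

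Under the balance hypothesis $s_y=S/k$ for every $y$, we get $w_y=s_y/S=1/k$ for all $y$. Substituting this into the first expression gives $\rho_{\mathcal D}=\frac1k\sum_y\psi_y\psi_y^\top=\rho_{\mathrm{CN}}$. Equivalently, in matrix form, $\diag(s_1,\dots,s_k)=\frac{S}{k}I_k$, so
\[
\frac1S\,\Psi\,\diag(s_1,\dots,s_k)\,\Psi^\top=\frac1k\,\Psi\Psi^\top .
\]

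The one point to check is that the $\Psi$ in Proposition~\ref{prop:exact_relation_rho_hat_rho} is the same object as the $\widehat\Psi$ of Definition~\ref{def:class_normalized_profiles}. Both are built from the normalized columns $\mathbf f_y/s_y$ of the same count matrix $F$, so they coincide; the identification is notational only. The hypothesis $s_y>0$ is implicit, since balance with $S>0$ forces it. Beyond this, the argument is a single substitution and there is no real obstacle.
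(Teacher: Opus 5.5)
Your proposal is correct and matches the paper's proof: it substitutes $\diag(s_1,\dots,s_k)=(S/k)I_k$ into the factorization $\rho_{\mathcal D}=\frac{1}{S}\Psi\,\diag(s_1,\dots,s_k)\,\Psi^\top$ from Proposition~\ref{prop:exact_relation_rho_hat_rho}, which gives $\frac{1}{k}\Psi\Psi^\top=\rho_{\mathrm{CN}}$. Your extra check that $\Psi$ coincides with $\widehat\Psi$ of Definition~\ref{def:class_normalized_profiles} is a sound notational point, and the paper uses it implicitly.
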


\begin{proof}
If $s_y=S/k$ for all $y$, then $\diag(s_1,\dots,s_k)=(S/k)I_k$ and
\[
\rho_{\mathcal D}=\frac{1}{S}\Psi\left(\frac{S}{k}I_k\right)\Psi^\top=\frac{1}{k}\Psi\Psi^\top=\rho_{\mathrm{CN}}.
\]
\end{proof}

\begin{Proposition}[Operator deviation controlled by class imbalance]
\label{prop:operator_deviation_bound}
Let $w_y:=s_y/S$ and define the imbalance level
\[
\delta_w := \max_{1\le y\le k}\left|w_y-\frac{1}{k}\right|.
\]
Then
\[
\|\rho_{\mathcal D}-\rho_{\mathrm{CN}}\|_2
\;\le\;
\|\Psi\|_2^2\,\delta_w.
\]
In particular, since $\|\Psi\|_2^2=\lambda_{\max}(\Psi^\top\Psi)$, the deviation is controlled by the spectral scale of the
Bhattacharyya/Hellinger Gram matrix and by the class-mass imbalance.
\end{Proposition}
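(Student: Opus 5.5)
The plan is to start from the exact decomposition in Proposition~\ref{prop:exact_relation_rho_hat_rho}. It writes both operators as congruences of a diagonal weight matrix by the same factor $\Psi$:
\[
\rho_{\mathcal D}=\Psi\,\diag(w_1,\dots,w_k)\,\Psi^\top,\qquad
\rho_{\mathrm{CN}}=\Psi\,\Bigl(\tfrac1k I_k\Bigr)\Psi^\top .
\]
Subtracting gives
\[
\rho_{\mathcal D}-\rho_{\mathrm{CN}}=\Psi\,D\,\Psi^\top,\qquad D:=\diag\Bigl(w_1-\tfrac1k,\dots,w_k-\tfrac1k\Bigr).
\]
So the whole deviation is carried by the diagonal matrix $D$, and bounding its spectral norm is the entire task.

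Next, apply submultiplicativity of the spectral norm:
\[
\|\Psi D\Psi^\top\|_2\le \|\Psi\|_2\,\|D\|_2\,\|\Psi^\top\|_2=\|\Psi\|_2^2\,\|D\|_2 .
\]
Since $D$ is diagonal, its spectral norm is the largest absolute diagonal entry, so $\|D\|_2=\max_y|w_y-1/k|=\delta_w$. This gives the claimed inequality.

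For the closing remark, note that $\|\Psi\|_2^2=\lambda_{\max}(\Psi^\top\Psi)$, because the squared largest singular value of $\Psi$ is the largest eigenvalue of the Gram matrix. Here $\Psi^\top\Psi$ is exactly the Bhattacharyya Gram matrix $G$ of Proposition~\ref{char:hellinger_kernel}.

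If a more quantitative reading is wanted, one can add that $\lambda_{\max}(G)\le\tr(G)=k$, because each $\|\psi_y\|_2=1$. This yields the cruder bound $k\,\delta_w$.

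No step is a real obstacle. The only point needing care is that Proposition~\ref{prop:exact_relation_rho_hat_rho} uses one common $\Psi$, namely the normalized profiles of $F$, for both operators. Confirming this is what makes the difference factor cleanly as $\Psi D\Psi^\top$.
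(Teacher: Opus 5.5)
Your proof is correct and matches the paper's argument: write $\rho_{\mathcal D}-\rho_{\mathrm{CN}}=\Psi\bigl(\diag(w_1,\dots,w_k)-\tfrac1k I_k\bigr)\Psi^\top$ using Proposition~\ref{prop:exact_relation_rho_hat_rho}, apply submultiplicativity, and observe that the diagonal factor has spectral norm $\delta_w$. Your extra remark that $\lambda_{\max}(G)\le\tr(G)=k$, which gives the cruder bound $k\,\delta_w$, is also valid but goes beyond what the paper states.
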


\begin{proof}
Using Proposition~\ref{prop:exact_relation_rho_hat_rho}, we write
\[
\rho_{\mathcal D}-\rho_{\mathrm{CN}}
=
\Psi\Bigl(\diag(w_1,\dots,w_k)-\frac{1}{k}I_k\Bigr)\Psi^\top.
\]
Taking spectral norms and using submultiplicativity,
\[
\|\rho_{\mathcal D}-\rho_{\mathrm{CN}}\|_2
\le
\|\Psi\|_2^2\,
\left\|\diag(w_1,\dots,w_k)-\frac{1}{k}I_k\right\|_2.
\]
Since the last factor is diagonal, its spectral norm equals the maximum absolute diagonal entry, namely $\delta_w$.
\end{proof}

\begin{Remark}[Practical implication]
Proposition~\ref{prop:exact_relation_rho_hat_rho} shows that $\rho_{\mathcal D}$ and $\rho_{\mathrm{CN}}$ differ only by how they weight classes.
Thus, $\rho_{\mathrm{CN}}$ is preferable when one aims to remove the influence of class prevalence, while $\rho_{\mathcal D}$
can be desirable when prevalence is informative or when the goal is to fit the empirical distribution of the full dataset.
Proposition~\ref{prop:operator_deviation_bound} further shows that when the class imbalance $\delta_w$ is small,
both operators are close and therefore induce similar embeddings.
\end{Remark}

\begin{Corollary}[Embedding equivalence under mild imbalance]
\label{cor:rho_hat_rho_embedding_equivalence}
Let $\rho_{\mathcal D}$ and $\rho_{\mathrm{CN}}$ be the count-based and class-normalized operators defined above.
Fix $r\le k$ and assume that $\rho_{\mathrm{CN}}$ satisfies the spectral gap condition
\[
\lambda_r(\rho_{\mathrm{CN}})>\lambda_{r+1}(\rho_{\mathrm{CN}}).
\]
Let $U^{(\mathrm{CN})}_r\in\R^{d\times r}$ be the matrix of the top-$r$ eigenvectors of $\rho_{\mathrm{CN}}$, and let
$U^{(\mathcal D)}_r\in\R^{d\times r}$ be the matrix of the top-$r$ eigenvectors of $\rho_{\mathcal D}$.
Then
\[
\bigl\|\sin\Theta\bigl(U^{(\mathcal D)}_r,U^{(\mathrm{CN})}_r\bigr)\bigr\|_2
\;\le\;
\frac{\|\rho_{\mathcal D}-\rho_{\mathrm{CN}}\|_2}{\lambda_r(\rho_{\mathrm{CN}})-\lambda_{r+1}(\rho_{\mathrm{CN}})}
\;\le\;
\frac{\|\Psi\|_2^2\,\delta_w}{\lambda_r(\rho_{\mathrm{CN}})-\lambda_{r+1}(\rho_{\mathrm{CN}})},
\]
where $\Psi=[\psi_1\,\cdots\,\psi_k]$ and $\delta_w=\max_y\left|w_y-\frac{1}{k}\right|$ as in
Proposition~\ref{prop:operator_deviation_bound}. In particular, if the imbalance level $\delta_w$ is small, then the dominant
$r$-dimensional embeddings induced by $\rho_{\mathcal D}$ and $\rho_{\mathrm{CN}}$ are close (up to a rotation) in the sense of principal angles.
\end{Corollary}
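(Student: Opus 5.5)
The plan is to combine two earlier results: the general Davis--Kahan $\sin\Theta$ theorem (Theorem~\ref{thm:davis_kahan_general}) for the first inequality, and the imbalance bound of Proposition~\ref{prop:operator_deviation_bound} for the second. No new estimate is needed. The only real work is checking that the hypotheses of Theorem~\ref{thm:davis_kahan_general} hold with the correct choice of reference matrix.

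\textbf{Step 1 (setting up Davis--Kahan).} I take $A:=\rho_{\mathrm{CN}}$ and $\widetilde A:=\rho_{\mathcal D}$, both in $\R^{d\times d}$. Both are symmetric:
\begin{itemize}
\item $\rho_{\mathrm{CN}}=\frac1k\Psi\Psi^\top$ is symmetric by construction;
\item $\rho_{\mathcal D}$ is symmetric by Proposition~\ref{prop:density_axioms}, or directly from the representation $\sum_y w_y\psi_y\psi_y^\top$ in Proposition~\ref{prop:exact_relation_rho_hat_rho}.
\end{itemize}
The index $r$ must satisfy $1\le r\le d-1$. This follows from $r\le k$ and the standing regime $d\gg k$, in particular $k<d$.

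The gap hypothesis $\delta=\lambda_r(A)-\lambda_{r+1}(A)>0$ is exactly the assumed condition $\lambda_r(\rho_{\mathrm{CN}})>\lambda_{r+1}(\rho_{\mathrm{CN}})$. It is imposed on the reference operator $\rho_{\mathrm{CN}}$, which is why $\rho_{\mathrm{CN}}$ must play the role of $A$. The gap also guarantees that the top-$r$ eigenspace of $\rho_{\mathrm{CN}}$, hence the subspace spanned by $U^{(\mathrm{CN})}_r$, is well defined.

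For $\rho_{\mathcal D}$ no gap is needed, since Davis--Kahan in the form of Theorem~\ref{thm:davis_kahan_general} only requires a gap for the unperturbed matrix. If $\rho_{\mathcal D}$ has a tie at position $r$, then $U^{(\mathcal D)}_r$ means any choice of top-$r$ eigenvectors, and the bound holds for every such choice. I will state this explicitly.

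With $E=\rho_{\mathcal D}-\rho_{\mathrm{CN}}$, Theorem~\ref{thm:davis_kahan_general} gives the first inequality:
\[
\bigl\|\sin\Theta(U^{(\mathcal D)}_r,U^{(\mathrm{CN})}_r)\bigr\|_2\le \|E\|_2/\bigl(\lambda_r(\rho_{\mathrm{CN}})-\lambda_{r+1}(\rho_{\mathrm{CN}})\bigr).
\]
The matrices of principal-angle sines are symmetric in their two arguments, so the order of the arguments does not matter.

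\textbf{Step 2 (bounding the numerator).} Proposition~\ref{prop:operator_deviation_bound} gives $\|\rho_{\mathcal D}-\rho_{\mathrm{CN}}\|_2\le\|\Psi\|_2^2\,\delta_w$. Here $\Psi$ is the empirical amplitude matrix of Notation~\ref{not:class_masses_profiles}, which coincides with $\widehat\Psi$ of Definition~\ref{def:class_normalized_profiles}. Dividing by the positive gap yields the second inequality.

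\textbf{Step 3 (the ``in particular'' clause).} I note that $\|\Psi\|_2^2=\lambda_{\max}(G)\le\tr G=k$, which is bounded independently of $d$. Hence the right-hand side tends to $0$ as $\delta_w\to0$ with the gap fixed.

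To justify ``close up to a rotation'', I would invoke the standard fact used in the proof of Corollary~\ref{cor:latent_coordinate_stability}: $\inf_{R\in O(r)}\|U^{(\mathcal D)}_r-U^{(\mathrm{CN})}_rR\|_2\le C\|\sin\Theta\|_2$. I would cite the same source the paper uses there, \cite{YuWangSamworth2015}.

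\textbf{Main obstacle.} The only delicate point is Step~1's hypothesis bookkeeping: applying the gap to the correct operator, and handling possible non-uniqueness of the eigenvectors of $\rho_{\mathcal D}$. Everything else is a direct chaining of the two cited results.
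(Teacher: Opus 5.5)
Your argument is the paper's proof: apply Theorem~\ref{thm:davis_kahan_general} with $A=\rho_{\mathrm{CN}}$ and $\widetilde A=\rho_{\mathcal D}$, then bound the numerator by Proposition~\ref{prop:operator_deviation_bound}. The problem is the point you call harmless: that no gap is needed for $\rho_{\mathcal D}$. With constant $1$ and only the unperturbed gap $\delta=\lambda_r(\rho_{\mathrm{CN}})-\lambda_{r+1}(\rho_{\mathrm{CN}})$, Theorem~\ref{thm:davis_kahan_general} is not a valid form of Davis--Kahan. As soon as $\|E\|_2>\delta/2$, the $r$-th and $(r+1)$-st eigenvalues can swap, so $\|\sin\Theta\|_2$ jumps to $1$ while $\|E\|_2/\delta<1$. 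Your proof and the paper's both inherit this, and the corollary as stated is false.

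\textbf{Counterexample.} Take one block ($q=1$, any $m_1=d\ge 2$). Let classes $1,2,3$ always select modality $1$ and classes $4,5$ always select modality $2$, with $n_1=n_2=n_3=8$ and $n_4=n_5=13$. Writing $e_1,e_2$ for canonical basis vectors of $\R^d$, this gives $\psi_1=\psi_2=\psi_3=e_1$, $\psi_4=\psi_5=e_2$, $w=(0.16,0.16,0.16,0.26,0.26)$, $\delta_w=0.06$ and $\|\Psi\|_2^2=3$. The two operators are
\[
\rho_{\mathrm{CN}}=\diag(0.6,\,0.4,\,0,\dots,0),\qquad \rho_{\mathcal D}=\diag(0.48,\,0.52,\,0,\dots,0).
\]
For $r=1$ the gap is $0.2$ and $\|\rho_{\mathcal D}-\rho_{\mathrm{CN}}\|_2=0.12$, so the two right-hand sides equal $0.6$ and $0.9$. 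But the leading eigenvectors are $e_1$ for $\rho_{\mathrm{CN}}$ and $e_2$ for $\rho_{\mathcal D}$, so $\|\sin\Theta\|_2=1$.

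\textbf{Repair.} Use a correct version of the perturbation theorem. The original Davis--Kahan bound needs the mixed gap $\lambda_r(\rho_{\mathrm{CN}})-\lambda_{r+1}(\rho_{\mathcal D})$. By Weyl's inequality this gap is at least $\delta-\|E\|_2$.
\begin{itemize}
\item If $\|E\|_2\le\delta/2$, the mixed gap is at least $\delta/2$, which gives $\|\sin\Theta\|_2\le 2\|E\|_2/\delta$.
\item If $\|E\|_2>\delta/2$, the same bound holds trivially because $\|\sin\Theta\|_2\le 1$.
\end{itemize}
This is the Yu--Wang--Samworth form. Your chain then goes through verbatim with an extra factor $2$ in both right-hand sides, which is consistent with the example ($1.2$ and $1.8$). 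The qualitative ``in particular'' clause is unaffected, as are your Step~3 observations ($\|\Psi\|_2^2\le k$ and alignment up to $O(r)$).
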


\begin{proof}
Apply Theorem~\ref{thm:davis_kahan_general} with
\[
A=\rho_{\mathrm{CN}},\qquad \widetilde A=\rho_{\mathcal D}.
\]
This yields
\[
\bigl\|\sin\Theta\bigl(U^{(\mathcal D)}_r,U^{(\mathrm{CN})}_r\bigr)\bigr\|_2
\le
\frac{\|\rho_{\mathcal D}-\rho_{\mathrm{CN}}\|_2}{\lambda_r(\rho_{\mathrm{CN}})-\lambda_{r+1}(\rho_{\mathrm{CN}})}.
\]
The second inequality follows from Proposition~\ref{prop:operator_deviation_bound}.
\end{proof}
\section{Latent-Space Classification via Class-Conditional Density Estimation}
\label{sec:kde_classification}

The spectral map $\pi_r:\mathcal X_d\to \R^r$ transforms sparse one-hot survey vectors into a compact latent representation.
In this section, we perform classification in the latent space by estimating, for each class, a class-conditional density and
then applying a maximum a posteriori (MAP) rule. We adopt kernel density estimation (KDE), a classical nonparametric methodology
introduced by Rosenblatt and Parzen \cite{Rosenblatt1956,Parzen1962} and extensively developed in the statistics literature
\cite{Silverman1986,Scott2015,WandJones1995}.

\subsection{Class-conditional latent clouds}

\begin{Definition}[Class-conditional latent clouds]
\label{def:latent_clouds}
For each class $y\in\{1,\dots,k\}$, define the latent sample cloud
\[
\Pi_y:=\{\pi_r(x^{(j)}):y^{(j)}=y\}\subset \R^r,
\qquad
n_y := |\Pi_y|.
\]
\end{Definition}

\begin{Remark}[Support constraint in the latent space]
By construction, $\pi_r(x)\in B_1^r(0)$ for all $x\in\mathcal X_d$ (Property in Section~\ref{sec:surrogate}).
This suggests two practically relevant kernel choices:
(i) smooth kernels on $\R^r$ (e.g.\ Gaussian), which ignore the boundary but are robust;
(ii) compactly supported kernels (e.g.\ Epanechnikov-type), which can enforce the latent support constraint.
\end{Remark}

\subsection{Kernel density estimation in $\R^r$}

In the following, we will use the notation proposed by \cite{Scott2015}.

\begin{Definition}[Kernel density estimator in $\R^r$]
\label{def:kde_classconditional}
Let $K:\R^r\to[0,\infty)$ satisfy $\int_{\R^r}K(u)\,du=1$ and let $h>0$ be a bandwidth.
For each class $y$, define the KDE
\[
\widehat f_h(z\mid y)
=
\frac{1}{n_y\,h^r}
\sum_{w\in\Pi_y}
K\!\left(\frac{z-w}{h}\right),
\qquad z\in\R^r.
\]
\end{Definition}

\begin{Proposition}[Basic properties of the estimator]
\label{prop:kde_properties}
Assume $K$ is integrable with unit mass. Then for each $y$,
\[
\widehat f_h(\,\cdot\mid y)\ge 0,
\qquad
\int_{\R^r}\widehat f_h(z\mid y)\,dz = 1.
\]
\end{Proposition}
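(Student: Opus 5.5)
The plan is to read off both properties directly from Definition~\ref{def:kde_classconditional}, treating each class $y$ separately with the latent cloud $\Pi_y$ from Definition~\ref{def:latent_clouds}, which we assume nonempty ($n_y\ge 1$) so that the normalizing factor $1/(n_y h^r)$ is well defined. Nonnegativity is immediate: $K$ takes values in $[0,\infty)$, so each summand $K((z-w)/h)$ is nonnegative. The prefactor $1/(n_y h^r)$ is positive because $h>0$ and $n_y\ge 1$. Hence $\widehat f_h(z\mid y)\ge 0$ for every $z\in\R^r$.

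For the unit-mass property, I will first note that each summand is a nonnegative measurable function of $z$. This lets us interchange the finite sum with the integral (linearity of the Lebesgue integral, or Tonelli), giving
\[
\int_{\R^r}\widehat f_h(z\mid y)\,dz=\frac{1}{n_y h^r}\sum_{w\in\Pi_y}\int_{\R^r}K\!\left(\frac{z-w}{h}\right)dz .
\]
Then, for each fixed $w$, I will apply the affine change of variables $u=(z-w)/h$, i.e.\ $z=w+hu$. Its Jacobian determinant is $h^r$, so each inner integral equals $h^r\int_{\R^r}K(u)\,du=h^r$ by the unit-mass assumption on $K$. Summing over the $n_y$ points of $\Pi_y$ gives $n_y h^r$, which cancels the prefactor and yields $1$.

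One bookkeeping point needs care. $\Pi_y$ is written as a set, but $n_y$ is meant as the number of class-$y$ samples counted with multiplicity. The sum must therefore be read over the multiset (equivalently over indices $j$ with $y^{(j)}=y$), so that the number of summands equals $n_y$ and the cancellation is exact. I will state this convention explicitly.

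The only genuine obstacle is justifying the change of variables in $\R^r$. This is the standard translation and dilation invariance of Lebesgue measure, $\int g(w+hu)\,h^r\,du=\int g(z)\,dz$ for integrable $g$, and I would simply invoke it.
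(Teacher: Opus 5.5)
Your proof is correct and follows the paper's argument: nonnegativity comes from $K\ge 0$ and the positive prefactor, and unit mass comes from swapping the finite sum with the integral and substituting $u=(z-w)/h$ (Jacobian $h^r$), so that each term contributes $h^r$ and the prefactor cancels. Your multiset remark is harmless but not needed for this statement, since the paper defines $n_y:=|\Pi_y|$; the number of summands then equals $n_y$ under either reading, provided the same reading is used for both.
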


\begin{proof}
Non-negativity is immediate from $K\ge 0$.
For the normalization, by a change of variables $u=(z-w)/h$,
\[
\int_{\R^r}\widehat f_h(z\mid y)\,dz
=
\frac{1}{n_y\,h^r}\sum_{w\in\Pi_y}\int_{\R^r}K\!\left(\frac{z-w}{h}\right)\,dz
=
\frac{1}{n_y}\sum_{w\in\Pi_y}\int_{\R^r}K(u)\,du
=1.
\]
\end{proof}

\begin{Remark}[Bandwidth selection]
The bandwidth $h$ controls the bias--variance trade-off of KDE and is crucial for classification accuracy.
In practice, one may choose $h$ via rules of thumb, plug-in estimators, or cross-validation procedures; see
\cite{Silverman1986,Scott2015,WandJones1995} for standard guidance.
\end{Remark}

\subsection{Posterior scores and decision rules}

\begin{Definition}[Empirical class priors]
\label{def:empirical_priors}
Define the empirical class prior
\[
\widehat \pi_y := \frac{n_y}{n},
\qquad y\in\{1,\dots,k\}.
\]
More generally, one may prescribe priors $\pi_y\in(0,1)$ with $\sum_y \pi_y=1$ to reflect application-specific costs or imbalance.
\end{Definition}

\begin{Definition}[MAP classifier in latent space]
\label{def:map_classifier}
Given class priors $(\pi_y)_{y=1}^k$, the induced classifier in latent space is
\[
\ell_{\widetilde{\mathcal D}_r}(z)
=
\mathbf e_{\widehat y},
\qquad
\widehat y \in \arg\max_{1\le y\le k}\ \pi_y\,\widehat f_h(z\mid y).
\]
The final classifier in the original survey space is
\[
\ell_{\mathcal D}(x):=\ell_{\widetilde{\mathcal D}_r}(\pi_r(x)).
\]
\end{Definition}

\paragraph{Decision rules and numerical implementation}
If all classes are assigned equal priors (or if priors are ignored), then the MAP rule reduces to the maximum-likelihood rule
\[
\widehat y \in \arg\max_{y}\ \widehat f_h(z\mid y).
\]
When the sampling distribution is imbalanced, incorporating priors $\pi_y$ (e.g.\ empirical priors $\widehat\pi_y$) aligns the decision rule
with Bayes risk under the chosen prior; however, as illustrated in Experiment~S4, this may reduce minority-sensitive metrics such as balanced
accuracy and macro-F1 when the prior is highly skewed.

In numerical implementations, it is standard to compute the decision rule using log-scores,
\[
\widehat y \in \arg\max_{y}\ \Bigl(\log \pi_y + \log \widehat f_h(z\mid y)\Bigr),
\]
which improves robustness when $\widehat f_h(z\mid y)$ becomes very small in moderate or high dimensions.

\subsection{Consistency viewpoint (optional theoretical context)}

\paragraph{Bayes rule and plug-in classification}
Let $f(z\mid y)$ denote the true class-conditional density in the latent space and let $\pi_y$ be the class prior.
The Bayes decision rule is the MAP classifier
\[
y^\star(z)\in \arg\max_{y}\ \pi_y\, f(z\mid y),
\]
which minimizes the misclassification risk among all measurable decision rules. A standard approach is to construct a plug-in classifier by
replacing $f(z\mid y)$ with a nonparametric estimate. In our setting, we use kernel density estimation (KDE), following the classical
Parzen--Rosenblatt framework \cite{Rosenblatt1956,Parzen1962,Silverman1986}, leading to an explicit likelihood (or posterior) score in the
reduced coordinates.

\paragraph{Embedding dimension as a spectral truncation parameter}
The truncation level $r$ is a purely spectral parameter: it determines the dimension of the invariant subspace used for the embedding and hence
controls the number of retained degrees of freedom. From the standpoint of density estimation, small $r$ improves concentration and numerical
stability of KDE, whereas overly large $r$ may degrade performance due to the curse of dimensionality. This motivates selecting $r$ using
spectral decay criteria (e.g.\ eigenvalue gaps of the density operator) or validation-based tuning, depending on whether the emphasis is on
operator-theoretic structure or predictive performance.

The previous definitions specify the latent-space classifier used throughout the paper.
We next summarize the end-to-end computational pipeline and derive its complexity, including an efficient low-rank implementation based on the class Gram matrix.


\section{Algorithmic Realization and Complexity Analysis}
\label{sec:algorithm}

Having defined the spectral embedding and the latent-space KDE/MAP classifier, we now summarize the full computational pipeline and its matrix-based implementation. The key observation is that the density operator is positive semidefinite and intrinsically low-rank
(Proposition~\ref{prop:rank_bound_mdpi}), so its nonzero spectral information can be recovered from a $k\times k$ Gram matrix rather than from
a full $d\times d$ eigendecomposition. This section therefore presents (i) the end-to-end procedure for constructing the operator, extracting
a truncated invariant subspace, and performing latent-space classification, and (ii) a complexity analysis that highlights the regime
$k\ll d$ typical of high-cardinality one-hot encodings.

\paragraph{Notation convention}
In this section, the algorithm is written for the \emph{count-based} operator $\rho_{\mathcal D}$.
When discussing the class-normalized alternative introduced in Section~\ref{sec:theory_comparison}, we use the notation
$\rho_{\mathrm{CN}}$ (not $\widehat\rho$) to avoid collision with the empirical population estimator notation used in the perturbation analysis.

\subsection{Algorithm summary}
\label{subsec:algorithm_summary}

Algorithm~\ref{alg:dmm} summarizes the full pipeline.

\begin{algorithm}[H]
\caption{Density-Matrix Spectral Embedding + KDE Classification (count-based operator)}
\label{alg:dmm}
\begin{algorithmic}[1]
\Require Labeled dataset $\mathcal D=\{(x^{(j)},y^{(j)})\}_{j=1}^n$, number of classes $k$, truncation $r\le k$, kernel $K$, bandwidth $h$
\Ensure Classifier $\ell_{\mathcal D}(\cdot)$
\State Encode each sample as one-hot survey vector $x^{(j)}\in\{0,1\}^d$
\State Build class counts $\mathbf f_y\gets \sum_{j:\,y^{(j)}=y}x^{(j)}$ for $y=1,\dots,k$
\State Form $F\gets [\mathbf f_1|\cdots|\mathbf f_k]\in\R^{d\times k}$
\State Compute amplitudes $X\gets \sqrt{F}$ (entrywise)
\State Compute count-based density operator $\rho_{\mathcal D}\gets XX^\top/\tr(XX^\top)$
\State Compute top-$r$ eigenvectors $U_r\in\R^{d\times r}$ of $\rho_{\mathcal D}$ (via the Gram route; see Section~\ref{subsec:gram_equivalence})
\State Define embedding $\pi_r(x)\gets U_r^\top \bigl(x/\sqrt{q}\bigr)$
\State For each class $y$, fit KDE $\widehat f_h(\cdot\mid y)$ on $\Pi_y=\{\pi_r(x^{(j)}):y^{(j)}=y\}$
\State Predict $\ell_{\mathcal D}(x)\gets \arg\max_{y}\ \widehat f_h\bigl(\pi_r(x)\mid y\bigr)$
\end{algorithmic}
\end{algorithm}


\subsection{Nonzero spectrum and eigenvectors via the class Gram matrix}
\label{subsec:gram_equivalence}

The operator $\rho_{\mathcal D}=XX^\top/\tr(XX^\top)$ is a $d\times d$ PSD matrix with $\rank(\rho_{\mathcal D})\le k$
(Proposition~\ref{prop:rank_bound_mdpi}). This low-rank structure implies that all nonzero spectral information of $\rho_{\mathcal D}$
is already contained in the much smaller $k\times k$ Gram matrix $G:=X^\top X$. The next result formalizes this equivalence and
provides an explicit construction of eigenvectors of $XX^\top$ from those of $G$.

The following proposition is classical (see \cite{Strang2006}):

\begin{Proposition}[Nonzero spectra of $XX^\top$ and $X^\top X$]
    \label{prop:gram_spectrum_equivalence}
    Let $X\in\R^{d\times k}$ and set
    \[
    A:=XX^\top\in\R^{d\times d},\qquad G:=X^\top X\in\R^{k\times k}.
    \]
    Then the following statements hold.
    \begin{enumerate}
    \item The matrices $A$ and $G$ are symmetric positive semidefinite and have the same multiset of nonzero eigenvalues (counting algebraic multiplicities). In particular,
    \[
    \rank(A)=\rank(G)=\rank(X)\le k.
    \]
    \item If $v\in\R^k$ satisfies $Gv=\lambda v$ for some $\lambda>0$, then $u:=Xv\in\R^d$ is nonzero and satisfies $Au=\lambda u$.
    Moreover, $\|u\|_2^2=\lambda\|v\|_2^2$.
    \item If $u\in\R^d$ satisfies $Au=\lambda u$ for some $\lambda>0$, then $v:=X^\top u\in\R^k$ is nonzero and satisfies $Gv=\lambda v$.
    \end{enumerate}
\end{Proposition}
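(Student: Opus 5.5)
I will argue directly from the factorizations $A=XX^\top$ and $G=X^\top X$, proving parts 2 and 3 first and deriving part 1 from them together with a rank/multiplicity argument. Symmetry and positive semidefiniteness of both matrices follow exactly as in Proposition~\ref{prop:density_axioms}: $v^\top G v=\|Xv\|_2^2\ge 0$ and $u^\top A u=\|X^\top u\|_2^2\ge 0$.

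\textbf{Parts 2 and 3.} If $Gv=\lambda v$ with $\lambda>0$ and $v\neq 0$, then $\|Xv\|_2^2=v^\top Gv=\lambda\|v\|_2^2>0$. This shows both that $u=Xv\neq 0$ and the norm identity $\|u\|_2^2=\lambda\|v\|_2^2$. Moreover, $Au=X(X^\top X)v=XGv=\lambda Xv=\lambda u$. Part 3 is the same computation with the roles of $X$ and $X^\top$ exchanged: $\|X^\top u\|_2^2=\lambda\|u\|_2^2>0$ and $G X^\top u=X^\top A u=\lambda X^\top u$.

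\textbf{Part 1: multiplicities.} Showing that the nonzero eigenvalues coincide as sets is immediate from parts 2 and 3. The step needing care is equality of multiplicities. Since $A$ and $G$ are symmetric, algebraic multiplicity equals the dimension of the eigenspace, so it suffices to compare eigenspace dimensions. Fix $\lambda>0$ and let $E_G(\lambda)$, $E_A(\lambda)$ be the eigenspaces. By part 2 the map $v\mapsto Xv$ sends $E_G(\lambda)$ into $E_A(\lambda)$, and it is injective because $\|Xv\|_2^2=\lambda\|v\|_2^2$. Hence $\dim E_G(\lambda)\le\dim E_A(\lambda)$. Symmetrically, $u\mapsto X^\top u$ gives the reverse inequality. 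The two dimensions are therefore equal.

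\textbf{Part 1: rank.} For symmetric matrices the rank is the number of nonzero eigenvalues counted with multiplicity, so $\operatorname{rank}(A)=\operatorname{rank}(G)$. To identify this common value with $\operatorname{rank}(X)$, I would use the kernel identities $\ker G=\ker X$ (from $v^\top Gv=\|Xv\|_2^2$) and $\ker A=\ker X^\top$. Rank–nullity then gives $\operatorname{rank}(G)=\operatorname{rank}(X)$ and $\operatorname{rank}(A)=\operatorname{rank}(X^\top)=\operatorname{rank}(X)$. Finally, $\operatorname{rank}(X)\le k$ because $X$ has $k$ columns, as in Proposition~\ref{prop:rank_bound_mdpi}.
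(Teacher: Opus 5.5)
Your proof is correct, and it supplies an argument the paper does not give: the paper offers no proof of this proposition, only the remark that it is classical and a citation to \cite{Strang2006}.

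\textbf{Comparison with the cited route.} The textbook argument behind that citation goes through the singular value decomposition. Writing $X=U\Sigma V^\top$ gives $A=U(\Sigma\Sigma^\top)U^\top$ and $G=V(\Sigma^\top\Sigma)V^\top$. Hence both nonzero spectra are the squared nonzero singular values $\sigma_i^2$ of $X$, with equal multiplicities, and both ranks equal the number of nonzero singular values. The eigenvector correspondence $Xv_i=\sigma_i u_i$ follows at once.

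Your route does not presuppose the SVD. It uses the intertwining identities $XG=AX$ and $X^\top A=GX^\top$, together with $\|Xv\|_2^2=\lambda\|v\|_2^2$, to obtain injective linear maps between the positive eigenspaces in both directions. The kernel identities $\ker G=\ker X$ and $\ker A=\ker X^\top$ then give the rank statement. This is more elementary, and it directly produces the normalized lift $u=Xv/\sqrt{\lambda}$ that Corollary~\ref{cor:rho_from_gram} and the Gram-route computation rely on.

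\textbf{Points of care.}
\begin{itemize}
\item Parts 2 and 3 are false as literally stated, since the zero vector satisfies both hypotheses. So the conditions $v\neq 0$ and $u\neq 0$ are genuinely needed. You add the first explicitly and use the second implicitly; it is worth stating both.
\item Fixing $\lambda>0$ in the multiplicity count covers all nonzero eigenvalues only because you already showed both matrices are positive semidefinite. Say this in one line.
\item Once you have the kernel identities, counting nonzero eigenvalues to get $\rank(A)=\rank(G)$ is redundant, though harmless.
\end{itemize}
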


\begin{Corollary}[Eigenpairs of $\rho_{\mathcal D}$ from the Gram matrix]
\label{cor:rho_from_gram}
Let $\rho_{\mathcal D}=XX^\top/\tr(XX^\top)$ and $G=X^\top X$.
If $(\lambda_i,v_i)$ are eigenpairs of $G$ with $\lambda_i>0$ and $\|v_i\|_2=1$, then the eigenpairs of $\rho_\mathcal{D}, (\sigma_i, u_i)$, are given by:
\[
\sigma_i=\frac{\lambda_i}{\tr(XX^\top)}
\quad\text{and}\quad
u_i=\frac{1}{\sqrt{\lambda_i}}Xv_i
\]
and satisfy $\|u_i\|_2=1$.
\end{Corollary}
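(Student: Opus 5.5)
The corollary follows from Proposition~\ref{prop:gram_spectrum_equivalence} applied to $A=XX^\top$ and $G=X^\top X$, together with the observation that $\rho_{\mathcal D}$ is a positive scalar multiple of $A$. We set $T:=\tr(XX^\top)$, which is positive whenever $\mathcal D$ is nonempty, since $T=\sum_{i,y}F_{i,y}$ and every sample contributes $q$ ones. Then $\rho_{\mathcal D}=A/T$. Any vector $u$ with $Au=\lambda u$ therefore satisfies $\rho_{\mathcal D}u=(\lambda/T)u$, and the eigenvectors of $\rho_{\mathcal D}$ coincide with those of $A$, with eigenvalues rescaled by $1/T$.

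Next, fix an eigenpair $(\lambda_i,v_i)$ of $G$ with $\lambda_i>0$ and $\|v_i\|_2=1$. Item~2 of Proposition~\ref{prop:gram_spectrum_equivalence} gives $w_i:=Xv_i\neq 0$ with $Aw_i=\lambda_i w_i$ and $\|w_i\|_2^2=\lambda_i\|v_i\|_2^2=\lambda_i$. The norm identity can be checked directly: $\|Xv_i\|_2^2=v_i^\top X^\top X v_i=v_i^\top G v_i=\lambda_i$. Setting $u_i:=w_i/\sqrt{\lambda_i}$ gives a unit vector that is still an eigenvector of $A$ for $\lambda_i$. By the first paragraph, $\rho_{\mathcal D}u_i=(\lambda_i/T)u_i$, so $\sigma_i=\lambda_i/T$.

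To justify reading the statement as ``the eigenpairs of $\rho_{\mathcal D}$'' (the full nonzero spectral decomposition, not merely some eigenpairs), I would also check that orthonormality transfers. For an orthonormal eigenbasis $\{v_i\}$ of $G$ restricted to positive eigenvalues,
\[
u_i^\top u_j=\frac{v_i^\top G v_j}{\sqrt{\lambda_i\lambda_j}}=\frac{\lambda_j\, v_i^\top v_j}{\sqrt{\lambda_i\lambda_j}}=\delta_{ij}.
\]
Item~1 of Proposition~\ref{prop:gram_spectrum_equivalence} says the nonzero eigenvalues of $A$ and $G$ agree with multiplicities. Hence the $u_i$ span the full range of $A$, and $\sum_i\sigma_i u_iu_i^\top=\rho_{\mathcal D}$ recovers the spectral decomposition of Section~\ref{sec:surrogate}.

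There is no real obstacle here. The only point needing care is this completeness and multiplicity bookkeeping, which the orthonormality computation combined with item~1 handles.
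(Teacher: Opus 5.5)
Your proposal is correct and follows the paper's route: apply item~2 of Proposition~\ref{prop:gram_spectrum_equivalence} to get $XX^\top(Xv_i)=\lambda_i Xv_i$ with $\|Xv_i\|_2^2=\lambda_i$, then divide by $\tr(XX^\top)$ and normalize. Your extra orthonormality and completeness check via item~1 goes beyond the paper's proof, which only verifies that each $(\sigma_i,u_i)$ is a unit-norm eigenpair, and it correctly justifies reading the statement as recovering the full nonzero spectral decomposition of $\rho_{\mathcal D}$.
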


\begin{proof}
By Proposition~\ref{prop:gram_spectrum_equivalence}(2), $XX^\top(Xv_i)=\lambda_i(Xv_i)$.
Dividing by $\tr(XX^\top)$ gives an eigenpair for $\rho_{\mathcal D}$, and the normalization
$\|u_i\|_2=1$ follows from Proposition~\ref{prop:gram_spectrum_equivalence}(2).
\end{proof}

\subsection{Computational complexity}
\label{subsec:complexity}

Let $n$ be the number of samples, $d$ the one-hot dimension, $q$ the number of categorical questions (blocks),
and $k$ the number of classes. We also denote by $r\le k$ the latent embedding dimension retained in the spectral truncation.

\paragraph{Step 1: Building class-conditional counts}
Constructing the frequency matrix $F=[\mathbf f_1|\cdots|\mathbf f_k]\in\R^{d\times k}$ requires a single pass over the dataset.
In the worst case (dense representation) this costs $O(nd)$.
However, the one-hot survey structure implies that each sample has exactly $q$ nonzero entries, hence the updates can be performed
in $O(q)$ time per sample. Therefore, exploiting sparsity yields an overall cost
\[
O(nq),
\]
with memory $O(dk)$ for storing $F$ (or less with streamed / sparse accumulation strategies).

\paragraph{Step 2: Amplitude lifting}
The amplitude matrix $X=\sqrt{F}\in\R^{d\times k}$ is computed entrywise in $O(dk)$ operations.
When $F$ is sparse (typical in high-cardinality settings), this step can be reduced to $O(\mathrm{nnz}(F))$,
where $\mathrm{nnz}(F)$ denotes the number of nonzero entries.

\paragraph{Step 3: Spectral decomposition via a low-rank route}
Computing $\rho_{\mathcal D}=XX^\top/\tr(XX^\top)$ explicitly costs $O(d^2k)$ to form $XX^\top$ and requires $O(d^2)$ memory,
which is prohibitive for large $d$.
Instead, we exploit the intrinsic bound $\rank(\rho_{\mathcal D})\le k$ (Proposition~\ref{prop:rank_bound_mdpi}) and compute
the spectral information from the smaller Gram matrix
\[
G := X^\top X \in \R^{k\times k}.
\]
Forming $G$ costs $O(dk^2)$ (or $O(\mathrm{nnz}(X)\,k)$ in sparse arithmetic).
If $G=V\Lambda V^\top$ is an eigendecomposition with positive eigenvalues $\lambda_1,\dots,\lambda_s$, then the corresponding
nonzero eigenvectors of $XX^\top$ are obtained as
\[
U_s = X V_s \Lambda_s^{-1/2},
\qquad
XX^\top = U_s \Lambda_s U_s^\top,
\]
where $V_s$ collects the eigenvectors associated with the positive spectrum and $\Lambda_s=\diag(\lambda_1,\dots,\lambda_s)$.
The eigendecomposition of $G$ costs $O(k^3)$, so the total complexity of the spectral stage is
\[
O(dk^2+k^3),
\]
with memory $O(dk+k^2)$, which is feasible when $k\ll d$.

\paragraph{Step 4: Computing embeddings}
Given the top-$r$ eigenvectors $U_r\in\R^{d\times r}$, embedding a new observation $x\in\mathcal X_d$ requires
\[
\pi_r(x)=U_r^\top \widetilde x,
\qquad \widetilde x=x/\sqrt{q}.
\]
In dense arithmetic this costs $O(dr)$ per query.
Using the one-hot block sparsity (exactly $q$ ones), the computation reduces to
\[
O(qr),
\]
since $U_r^\top x$ only needs the $q$ active coordinates of $x$.

\paragraph{Step 5: KDE training and prediction}
Constructing the latent clouds $\Pi_y$ requires embedding all samples, costing $O(nqr)$ with sparsity (or $O(ndr)$ in dense arithmetic).
KDE model storage is $O(nr)$ in total.
For a naive KDE evaluation, the cost per query is
\[
O\!\left(\sum_{y=1}^k n_y r\right)=O(nr),
\]
assuming one class-conditional density is evaluated per class and then compared.
Standard accelerations (tree-based methods, fast Gauss transforms, or low-rank kernel approximations) can reduce this
when $n$ is large, but in the typical regime of moderate $r$ and moderate $n$ the naive evaluation is already practical.

\paragraph{Overall cost summary}
In the sparse regime (one-hot blocks), the dominant costs are
\[
\boxed{
\text{Training: } O(nq) \;+\; O(dk^2+k^3) \;+\; O(nqr),
\qquad
\text{Query: } O(qr) \;+\; O(nr).
}
\]
The spectral step $O(dk^2+k^3)$ is typically the bottleneck, but it scales with $k$ rather than with $d^2$ and thus remains
efficient for $k\ll d$.

\begin{Remark}[Scalability regime]
The method is particularly attractive when $k$ is moderate and $d$ is very large, a common situation in high-cardinality categorical learning.
The low-rank route via $G=X^\top X$ avoids forming any $d\times d$ matrices and yields an embedding stage that can be evaluated
in $O(qr)$ time per query using sparsity, making the full pipeline suitable for large-scale one-hot survey representations.
\end{Remark}

\begin{Remark}[Switching to the class-normalized operator]
The class-normalized operator $\rho_{\mathrm{CN}}$ from Section~\ref{sec:theory_comparison} can be computed with the same asymptotic complexity profile,
since it only modifies the column scalings / normalizations before forming the Gram matrix. Therefore, switching between
$\rho_{\mathcal D}$ and $\rho_{\mathrm{CN}}$ does not alter the asymptotic computational cost of the spectral stage.
\end{Remark}

\section{Synthetic Experiments}
\label{sec:synthetic}
This section evaluates the proposed methodology on synthetic datasets designed to test controlled phenomena relevant to categorical learning.
The objective is not to optimize benchmark performance on a specific application domain, but to characterize how the model behaves under increasing sparsity, high cardinality, irrelevant variables, and imbalance.

\subsection{Synthetic generator: categorical blocks}
We consider $q$ categorical variables with modality sizes $m_1,\dots,m_q$ and total dimension $d=\sum_i m_i$.
A sample is generated by selecting one modality in each block.
For each class $c\in\{1,\dots,k\}$ and each variable $i$, we define a categorical distribution
$p_{i,c}\in\Delta^{m_i-1}$, where $\Delta^{m_i-1}$ denotes the simplex.
A sample in class $c$ is generated by independently sampling modalities according to $p_{i,c}$ and concatenating the corresponding one-hot vectors.

We control class separability by adjusting the divergence between $(p_{i,c})_c$ for selected informative variables.

\subsection{Experiment S1: controlled separation}
\label{subsec:synthetic_s1}

This experiment evaluates the behavior of the proposed embedding--classification pipeline under a controlled class-separation
mechanism. Recall that for each categorical block $i\in\{1,\dots,q\}$ and class $c\in\{1,\dots,k\}$ we define a categorical law
$p_{i,c}\in\Delta^{m_i-1}$. A subset of informative variables $I\subset\{1,\dots,q\}$ is selected, and for $i\in I$ the class-conditional
profiles $(p_{i,c})_{c=1}^k$ are made increasingly distinct by a separation parameter $\delta\in[0,1]$.
For non-informative blocks $i\notin I$, we set $p_{i,c}$ identical across classes, so that such variables carry no discriminative signal.

The quantitative results are summarized in Table~\ref{tab:s1_results_compact}, which reports accuracy and macro-F1 for all methods
as a function of the separation parameter $\delta$. As discussed below, the table clearly illustrates the transition from chance-level
performance at $\delta=0$ to near-perfect classification for $\delta\ge 0.6$ across all competitive baselines, while highlighting the
advantage of DMM+KDE over PCA+KNN in the moderate-separation regime (notably at $\delta=0.2$).

\begin{table}[H]
\centering
\caption{Experiment S1 (controlled separation): compact summary of accuracy and macro-F1 as a function of $\delta$.
Each entry is reported as \texttt{accuracy / macro-F1}.}
\label{tab:s1_results_compact}
\setlength{\tabcolsep}{6pt}
\renewcommand{\arraystretch}{1.12}
\begin{tabular}{c c c c c}
\toprule
$\delta$ & DMM+KDE & PCA+KNN & LDA & RF \\
\midrule
0.0 & 0.344 / 0.311 & 0.345 / 0.342 & 0.335 / 0.333 & 0.328 / 0.326 \\
0.2 & 0.586 / 0.585 & 0.534 / 0.532 & 0.578 / 0.578 & 0.586 / 0.586 \\
0.4 & 0.790 / 0.790 & 0.777 / 0.776 & 0.790 / 0.790 & 0.791 / 0.791 \\
0.6 & 0.919 / 0.920 & 0.919 / 0.919 & 0.920 / 0.921 & 0.920 / 0.920 \\
0.8 & 0.983 / 0.983 & 0.981 / 0.981 & 0.982 / 0.983 & 0.983 / 0.983 \\
1.0 & 1.0 / 1.0 & 1.0 / 1.0 & 0.339 / 0.337 & 1.0 / 1.0 \\
\bottomrule
\end{tabular}
\end{table}

Table~\ref{tab:s1_results_compact} displays accuracy and macro-F1 for DMM+KDE, PCA+KNN, LDA, and a random forest (RF)
baseline as $\delta$ increases.
At $\delta=0$, all class-conditional distributions coincide on the informative blocks, hence $X$ is statistically independent of $Y$ and
no classifier can perform better than chance. This is confirmed by the results: all methods yield accuracy close to $1/k$ (here approximately
$0.33$), with macro-F1 essentially matching accuracy, as expected in a balanced multi-class setting.

As $\delta$ increases, the informative blocks become progressively more class-specific, and all methods exhibit a monotone improvement
in both accuracy and macro-F1. In the moderate-separation regime ($\delta=0.2$ and $\delta=0.4$), DMM+KDE is consistently competitive,
outperforming PCA+KNN at $\delta=0.2$ and matching the strongest baselines (RF and LDA) by $\delta=0.4$.
In the high-separation regime ($\delta\ge 0.6$), all methods except LDA approach the ceiling, reaching near-perfect classification
($\approx 0.92$ at $\delta=0.6$ and $\approx 0.98$ at $\delta=0.8$), and eventually achieving perfect accuracy at $\delta=1.0$
for DMM+KDE, PCA+KNN, and RF.

The main advantage of DMM+KDE appears in the intermediate regime where separability exists but is not overwhelming.
In this setting, the density-matrix operator aggregates class-conditional frequency structure through the amplitude lifting $\sqrt{\cdot}$,
which induces the Hellinger/Bhattacharyya geometry discussed in Section~\ref{subsec:hellinger_kernel_pca}.
The resulting spectral coordinates amplify stable, class-consistent categorical patterns while discarding nuisance degrees of freedom,
which explains the observed gain relative to PCA+KNN at $\delta=0.2$.
As separability becomes strong, the classification problem becomes essentially trivial and the performance of all flexible baselines saturates.

While LDA tracks the general improvement trend up to $\delta=0.8$, it collapses to chance-level performance at $\delta=1.0$ in this synthetic
setting. This phenomenon is consistent with the fact that, under extreme categorical separation in high-dimensional one-hot representations,
the within-class covariance estimate may become ill-conditioned or effectively degenerate, which can destabilize Gaussian linear-discriminant
procedures unless additional regularization is introduced (e.g., shrinkage estimators).
In contrast, DMM+KDE and RF remain stable in this regime, as they do not rely on inverting a full covariance operator in the ambient space.

Overall, Experiment~S1 confirms the expected behavior of the proposed method:
(i) it reduces to chance performance when no signal is present ($\delta=0$);
(ii) it improves smoothly with increasing separation and remains competitive with strong baselines;
and (iii) it achieves perfect classification in the fully separable regime ($\delta=1$), while maintaining stable macro-F1 consistent with
balanced class performance.

\subsection{Experiment S2: high cardinality and sparsity}
\label{subsec:synthetic_s2}

Experiment~S2 evaluates the impact of increasing categorical cardinality on classification performance while keeping the sample size $n$
fixed. Increasing the modality sizes $(m_i)$ enlarges the ambient one-hot dimension $d=\sum_{i=1}^q m_i$, producing progressively sparser
representations (each sample still contains exactly $q$ active entries).
Table~\ref{tab:s2_results_compact} reports accuracy and macro-F1 as a function of the scaling factor and the resulting dimension $d$.

\begin{table}[H]
\centering
\caption{Experiment S2 (high cardinality and sparsity): compact summary as a function of the scaling factor and the resulting ambient
dimension $d$. Each entry is reported as \texttt{accuracy / macro-F1}.}
\label{tab:s2_results_compact}
\setlength{\tabcolsep}{6pt}
\renewcommand{\arraystretch}{1.12}
\begin{tabular}{c c c c c}
\toprule
Scale & $d$ & DMM+KDE & PCA+KNN & RF \\
\midrule
1  & 75  & 0.979167 / 0.979039 & 0.978000 / 0.977877 & 0.978000 / 0.977933 \\
2  & 100 & 0.977333 / 0.977202 & 0.978667 / 0.978554 & 0.976833 / 0.976741 \\
4  & 150 & 0.977833 / 0.977701 & 0.978833 / 0.978718 & 0.976500 / 0.976411 \\
8  & 250 & 0.977500 / 0.977350 & 0.978000 / 0.977876 & 0.976167 / 0.976069 \\
\bottomrule
\end{tabular}
\end{table}

Across the considered range ($d$ from $75$ to $250$), all methods remain in a high-performance regime, with accuracy and macro-F1
staying close to $0.98$. Notably, DMM+KDE exhibits very small variation as $d$ increases, confirming that the spectral embedding is
robust under growing ambient dimensionality. This behavior is consistent with the intrinsic low-rank structure of the density operator
(Proposition~\ref{prop:rank_bound_mdpi}), which limits the effective degrees of freedom of the embedding despite the expansion of the
one-hot space.

PCA+KNN remains similarly stable in this regime, suggesting that the induced low-dimensional representation retains sufficient
class-discriminative structure even as sparsity increases. In contrast, RF displays a mild monotone degradation as $d$ grows, which is
consistent with the higher variance and feature-selection burden of tree-based learners in very sparse, high-dimensional one-hot settings.
Finally, macro-F1 closely tracks accuracy in all configurations, indicating balanced performance across classes throughout the sweep.

\subsection{Experiment S3: Irrelevant variables and noise robustness}
\label{subsec:exp_s3}

In Experiment~S3 we evaluate robustness to nuisance dimensions by augmenting the survey space with
$q_{\mathrm{noise}}$ additional categorical blocks whose distributions are identical across classes.
Equivalently, these extra variables carry no label information and act as purely irrelevant covariates.
We parameterize the noise level by $\alpha:=q_{\mathrm{noise}}/q$, where $q$ is the number of original survey blocks.
Table~\ref{tab:s3_results_compact} reports accuracy and macro-F1 for DMM+KDE and two representative baselines.

Overall, performance remains essentially unchanged across the tested noise regimes.
In particular, DMM+KDE preserves high accuracy and macro-F1 (around $0.98$) even when the number of noisy blocks is comparable
to or larger than the number of original blocks (up to $\alpha=2.0$). A similar behavior is observed for PCA+KNN and RF, indicating
that, under this controlled generator, the discriminative signal concentrated on the informative blocks is not substantially diluted
by the addition of class-independent categorical noise.

\begin{table}[H]
\centering
\caption{Experiment S3 (irrelevant variables and noise robustness): accuracy and macro-F1 for increasing numbers of irrelevant categorical blocks.
Each entry is reported as \texttt{accuracy / macro-F1}.}
\label{tab:s3_results_compact}
\setlength{\tabcolsep}{6pt}
\renewcommand{\arraystretch}{1.12}
\begin{tabular}{c c c c c}
\toprule
$\alpha$ & $q_{\mathrm{noise}}$ & DMM+KDE & PCA+KNN & RF \\
\midrule
0.0 & 0  & 0.983833 / 0.983898 & 0.984667 / 0.984712 & 0.983000 / 0.983022 \\
0.5 & 8  & 0.983167 / 0.983246 & 0.982333 / 0.982399 & 0.982833 / 0.982909 \\
1.0 & 15 & 0.979000 / 0.979101 & 0.980000 / 0.980090 & 0.980000 / 0.980060 \\
2.0 & 30 & 0.983667 / 0.983700 & 0.985833 / 0.985864 & 0.986167 / 0.986194 \\
\bottomrule
\end{tabular}
\end{table}

\subsection{Experiment S4: Class imbalance within the DMM decision layer}
\label{subsec:exp_s4}

Experiment~S4 focuses \emph{exclusively} on the proposed DMM pipeline and is designed as an ablation study to clarify the role of
class imbalance \emph{inside DMM} rather than as a comparison against external baselines.
More precisely, we keep the DMM spectral embedding fixed and only modify the decision layer by contrasting a pure
maximum-likelihood (ML) rule with a prior-weighted maximum-a-posteriori (MAP) rule, under increasing imbalance of the binary class priors.
We report balanced accuracy and macro-F1, since both metrics emphasize minority-class performance and are therefore sensitive to imbalance.

Table~\ref{tab:s4_results_compact} highlights a sharp transition: while the ML decision rule remains highly stable and preserves
balanced accuracy close to $1$ even under severe imbalance (up to $95/5$), the MAP rule deteriorates rapidly as the prior becomes
more skewed, eventually collapsing to near-random balanced accuracy ($\approx 0.5$) in the extreme $95/5$ regime.
This behavior is expected: MAP is optimal for minimizing the overall misclassification risk under a strongly imbalanced sampling distribution,
but it tends to favor the majority class and may severely penalize minority recall, which is precisely captured by balanced accuracy and macro-F1.
Finally, in this experiment the count-based and class-normalized DMM operators yield identical embeddings and outcomes; for clarity,
Table~\ref{tab:s4_results_compact} reports the count-based results.

\begin{table}[H]
\centering
\caption{Experiment S4 (DMM-only, class imbalance): balanced accuracy and macro-F1 for ML versus MAP decision rules under different binary class priors.
Each entry is reported as \texttt{balanced-acc / macro-F1}.}
\label{tab:s4_results_compact}
\setlength{\tabcolsep}{6pt}
\renewcommand{\arraystretch}{1.12}
\begin{tabular}{c c c c}
\toprule
$p_{\mathrm{major}}$ & $p_{\mathrm{minor}}$ & DMM (ML) & DMM (MAP) \\
\midrule
0.50 & 0.50 & 0.995895 / 0.995888 & 0.995094 / 0.995109 \\
0.80 & 0.20 & 0.994351 / 0.991741 & 0.930661 / 0.953848 \\
0.90 & 0.10 & 0.993910 / 0.986778 & 0.788744 / 0.853848 \\
0.95 & 0.05 & 0.997170 / 0.982675 & 0.506990 / 0.500835 \\
\bottomrule
\end{tabular}
\end{table}

\begin{Remark}[Reproducibility]
All synthetic generators are fully specified by $(q,m_i,k,n)$ and class-conditional simplices $(p_{i,c})$,
and are available in the repository, as shown in the Section \ref{data_availability}, Data Availability.

This enables exact replication and systematic sweeps of regime parameters.
\end{Remark}

\section{Discussion and Limitations}
The density-matrix construction provides a principled operator view of supervised categorical learning in one-hot survey spaces.
The intrinsic rank bound $\rank(\rho)\le k$ yields a latent dimension that scales with the number of classes, suggesting stable embeddings even in very high ambient dimension.

Several limitations and open directions remain.
First, the square-root lifting $X=\sqrt{F}$ is a modeling choice to guarantee the use of the Hellinger distance and the Bhattacharyya affinity; alternative monotone transforms could be explored, potentially with theoretical guarantees under generative assumptions.
Second, KDE performance depends on bandwidth selection, particularly under severe imbalance or multimodal latent clusters.
Third, the current construction aggregates frequencies linearly within each class; extensions incorporating pairwise co-occurrences or interaction features may further improve expressivity at higher computational cost.

Finally, while this article focuses on synthetic validation to emphasize methodological behavior, real-world applications across domains (survey analytics, event-based prediction, discretized mixed-type problems) can be studied systematically in follow-up work.

\section{Conclusions and Future Work}
We presented a supervised dimensionality reduction and classification framework for categorical data based on a density-matrix-like operator induced by class-conditional frequencies.
The model yields a positive semidefinite, unit-trace matrix with intrinsic rank bounded by the number of classes, enabling compact spectral embeddings.
A latent-space probabilistic classifier was obtained via class-conditional KDE and maximum-likelihood prediction.
Structural invariances and complexity estimates clarify the mathematical robustness and computational advantages of the approach.

Future work will investigate transform variants beyond entrywise square roots, incorporate interaction-aware operators, and develop theoretical guarantees under explicit generative models for categorical blocks.
Extensive application-driven studies, including domain-specific datasets and interpretability analyses, are natural next steps and are best addressed in application-focused contributions.

\vspace{6pt} 





\section*{Author contributions}
Conceptualization, all authors; methodology, all authors; software, all authors; validation, all authors; formal analysis, all authors; investigation, all authors; resources, all authors; data curation, all authors; writing---original draft preparation, all authors; writing---review and editing, all authors; visualization, all authors; supervision, A.F; project administration, A.F; funding acquisition, A.F. All authors have read and agreed to the published version of the manuscript.

\section*{Funding}
    This research was funded by the grant number COMCUANTICA/007 from the Generalitat Valenciana, Spain, by the grant number INDI24/17 from the Universidad CEU Cardenal Herrera, Spain. Additionally, R. Bosch-Romeu and A. Falco gratefully acknowledges the Spanish Ministry of Science, Innovation and Universities under grant \textit{Ayudas para la Formación de Profesorado Universitario FPU 2022} (FPU 22/02810). Moreover, this work has been supported by the project TED2021-129347B–C21 funded by MCIN/AEI/10.13039/501100011033 and by the “European Union NextGenerationEU/PRTR”. This research was supported by a Fulbright Program grant sponsored by the Bureau of Educational and Cultural Affairs of the United States Department of State and administered by the Institute of International Education.

\section*{Data availability}
\label{data_availability}
The code and synthetic experiment protocols supporting the reported results are publicly available on GitHub and archived on Zenodo:
\begin{itemize}
    \item Repository: \url{https://github.com/afalco/dmm-synthetic-experiments}. 
    \item Zenodo (v1.0.0): \url{https://doi.org/10.5281/zenodo.18299399}. 
\end{itemize}
The archive includes the Jupyter notebook used to generate Experiments S1--S4 and a reproducible conda environment specification (\texttt{environment.yml}).

\section*{Declaration of generative AI and AI-assisted technologies in the writing process} 
During the preparation of this work the authors used Claude Sonnet 4.5 to improve the readability and language of the manuscript. After using this tool/service, the authors reviewed and edited the content as needed and take full responsibility for the content of the published article.

\section*{Conflicts of interest}
The authors declare no conflicts of interest.

\end{document}